\newcommand{\norm}[1]{\left\lVert#1\right\rVert}
\newcommand{\bigO}{\mathcal{O}}
\newcommand{\cvectwo}[2]{\begin{bmatrix} #1 \\ #2 \end{bmatrix}}
\newcommand{\rvectwo}[2]{\begin{bmatrix} #1 & #2 \end{bmatrix}}
\newcommand{\bmattwo}[4]{\begin{bmatrix} #1 & #2 \\ #3 & #4 \end{bmatrix}}
\newcommand{\T}{\mathsf{T}}
\def\eqref#1{equation~\ref{#1}}
\def\1{\bm{1}}
\newcommand{\calN}{\mathcal{N}}
\newtheorem{myprop}{Proposition}
\newtheorem{mythm}{Theorem}
\DeclareMathOperator*{\Tr}{\mathrm{tr}}
\newcommand{\R}{\ensuremath{\mathbb{R}}}
\newcommand{\bigip}[2]{\ensuremath{\left\langle #1, #2 \right\rangle}}
\newcommand{\E}{\mathbb{E}}
\DeclareMathAlphabet{\mathsfit}{\encodingdefault}{\sfdefault}{m}{sl}
\SetMathAlphabet{\mathsfit}{bold}{\encodingdefault}{\sfdefault}{bx}{n}
\DeclareMathOperator{\veclin}{\text{vec}}
\setlist{nosep}
\newcommand\blfootnote[1]{%
  \begingroup
  \renewcommand\thefootnote{}\footnote{#1}%
  \addtocounter{footnote}{-1}%
  \endgroup
}
\title{Observational Overfitting in Reinforcement Learning}
\author{
Xingyou Song$^{\ast 1}$
\and Yiding Jiang$^{\dagger 1}$
\and Stephen Tu$^1$
\and Yilun Du$^{\ast 2}$
\and Behnam Neyshabur$^1$}
\date{ $^1$Google\\
       $^2$MIT}
\begin{document}
\maketitle

\begin{abstract}

A major component of overfitting in model-free reinforcement learning (RL) involves the case where the agent may mistakenly correlate reward with certain spurious features from the observations generated by the Markov Decision Process (MDP). We provide a general framework for analyzing this scenario, which we use to design multiple synthetic benchmarks from only modifying the observation space of an MDP. When an agent overfits to different observation spaces even if the underlying MDP dynamics is fixed, we term this \textit{observational overfitting}. Our experiments expose intriguing properties especially with regards to \textit{implicit regularization}, and also corroborate results from previous works in RL generalization and supervised learning (SL). 
\end{abstract}

\blfootnote{$^\ast$Work partially performed as an OpenAI Fellow.}
\blfootnote{$^\dagger$Work performed during the Google AI Residency Program. \url{http://g.co/airesidency}}

\footnotetext[1]{\texttt{\{xingyousong,ydjiang,stephentu,neyshabur\}@google.com}.}
\footnotetext[2]{\texttt{yilundu@mit.edu}}

\section{Introduction}
\label{introduction}
Generalization for RL has recently grown to be an important topic for agents to perform well in unseen environments. Complication arises when the dynamics of the environments entangle with the observation, which is often a high-dimensional projection of the true latent state.
One particular framework, which we denote by \textit{zero-shot supervised framework}  \citep{dissection,overfittingRL,learnfast, procedural} and is used to study RL generalization, is to treat it analogous to a classical supervised learning (SL) problem -- i.e. assume there exists a distribution of MDP's, train jointly on a finite ``training set" sampled from this distribution, and check expected performance on the entire distribution, with the fixed trained policy. In this framework, there is a spectrum of analysis, ranging from almost purely theoretical analysis \citep{salesforce, lipschitz} to full empirical results on diverse environments \citep{overfittingRL, assessing}. 

However, there is a lack of analysis in the middle of this spectrum. On the theoretical side, previous work do not provide analysis for the case when the underlying MDP is relatively complex and requires the policy to be a non-linear function approximator such as a neural network. On the empirical side, there is no common ground between recently proposed empirical benchmarks. This is partially caused by multiple confounding factors for RL generalization that can be hard to identify and separate. For instance, an agent can overfit to the MDP dynamics of the training set, such as for control in Mujoco \citep{robust_adversarial_rl, simplicity}. In other cases, an RNN-based policy can overfit to maze-like tasks in exploration \citep{overfittingRL}, or even exploit determinism and avoid using observations \citep{atari,brute}. Furthermore, various hyperparameters such as the batch-size in SGD \citep{batchsize}, choice of optimizer \citep{adam}, discount factor $\gamma$ \citep{dependencehorizon} and regularizations such as entropy \citep{entropy_regularization} and weight norms \citep{coinrun} can also affect generalization.

Due to these confounding factors, it can be unclear what parts of the MDP or policy are actually contributing to overfitting or generalization in a principled manner, especially in empirical studies with newly proposed benchmarks. In order to isolate these factors, we study one broad factor affecting generalization that is most correlated with themes in SL, specifically \textit{observational overfitting}, where an agent overfits due to properties of the observation which are irrelevant to the latent dynamics of the MDP family. To study this factor, we fix a single underlying MDP's dynamics and generate a distribution of MDP's by only modifying the observational outputs.

Our contributions in this paper are the following:
\begin{enumerate}
\item We discuss realistic instances where observational overfitting may occur and its difference from other confounding factors, and design a parametric theoretical framework to induce observational overfitting that can be applied to \textit{any} underlying MDP.

\item We study observational overfitting with linear quadratic regulators (LQR) in a synthetic environment and neural networks such as multi-layer perceptrons (MLPs) and convolutions in classic Gym environments. A primary novel result we demonstrate for all cases is that \textit{implicit regularization} occurs in this setting in RL. We further test the implicit regularization hypothesis on the benchmark CoinRun from using MLPs, even when the underlying MDP dynamics are changing per level.

\item In the Appendix, we expand upon previous experiments by including full training curve and hyperparamters. We also provide an extensive analysis of the convex one-step LQR case under the observational overfitting regime, showing that under Gaussian initialization of the policy and using gradient descent on the training cost, a generalization gap must necessarily exist.
\end{enumerate}

The structure of this paper is outlined as follows: Section \ref{sec:motivation} discusses the motivation behind this work and the synthetic construction to abstract certain observation effects. Section \ref{sec:experiments} demonstrates numerous experiments using this synthetic construction that suggest implicit regularization is at work. Finally, Section \ref{sec:coinrun} tests the implicit regularization hypothesis, as well as ablates various ImageNet architectures and margin metrics.

\section{Motivation and Related Work} \label{sec:motivation}
We start by showing an example of observational overfitting, found from the Gym-Retro benchmark for the game Sonic The Hedgehog \citep{learnfast}. In this benchmark, the agent is given 47 training levels with rewards corresponding to increases in horizontal location. The policy is trained until 5K reward. At test time, 11 unseen levels are partitioned into starting positions, and the rewards are measured and averaged.

As shown in Figure \ref{fig:intro_sonic}, by using saliency maps \citep{saliency}, we found that the agent strongly overfits to the scoreboard/timer (i.e. an artifact correlated with progress in the level through determinism). In fact, by \textit{only showing the scoreboard as the observation}, we found that the agent was still able to train to 5K reward. By blacking out this scoreboard with a black rectangle in the regular image during training, we saw an \textit{increase} in test performance performance by $10 \%$ for both the NatureCNN and IMPALA policies described in \citep{coinrun}. Specifically, in terms of mean reward across training runs, NatureCNN increased from $1052$ to $1141$, while IMPALA increased from $1130$ to $1250$ when the standard deviation between runs was $40$.

Furthermore, we found that background objects such as clouds and textures were also highlighted, suggesting that they are also important features for training the agent. One explanation for this effect is due to the benchmark being from a sidescroller game - the background objects move backward as the character moves forward, thus making them correlated with progress.

This example highlights the issues surrounding MDP's with rich, textured observations - specifically, the agent can use any features that are correlated with progress, even those which may not generalize across levels. This is an important issue for vision-based policies, as many times it is not obvious what part of the observation causes an agent to act or generalize.

\begin{figure}[H]

\begin{minipage}{1.0\textwidth}
    \centering
	\includegraphics[width=0.45\textwidth]{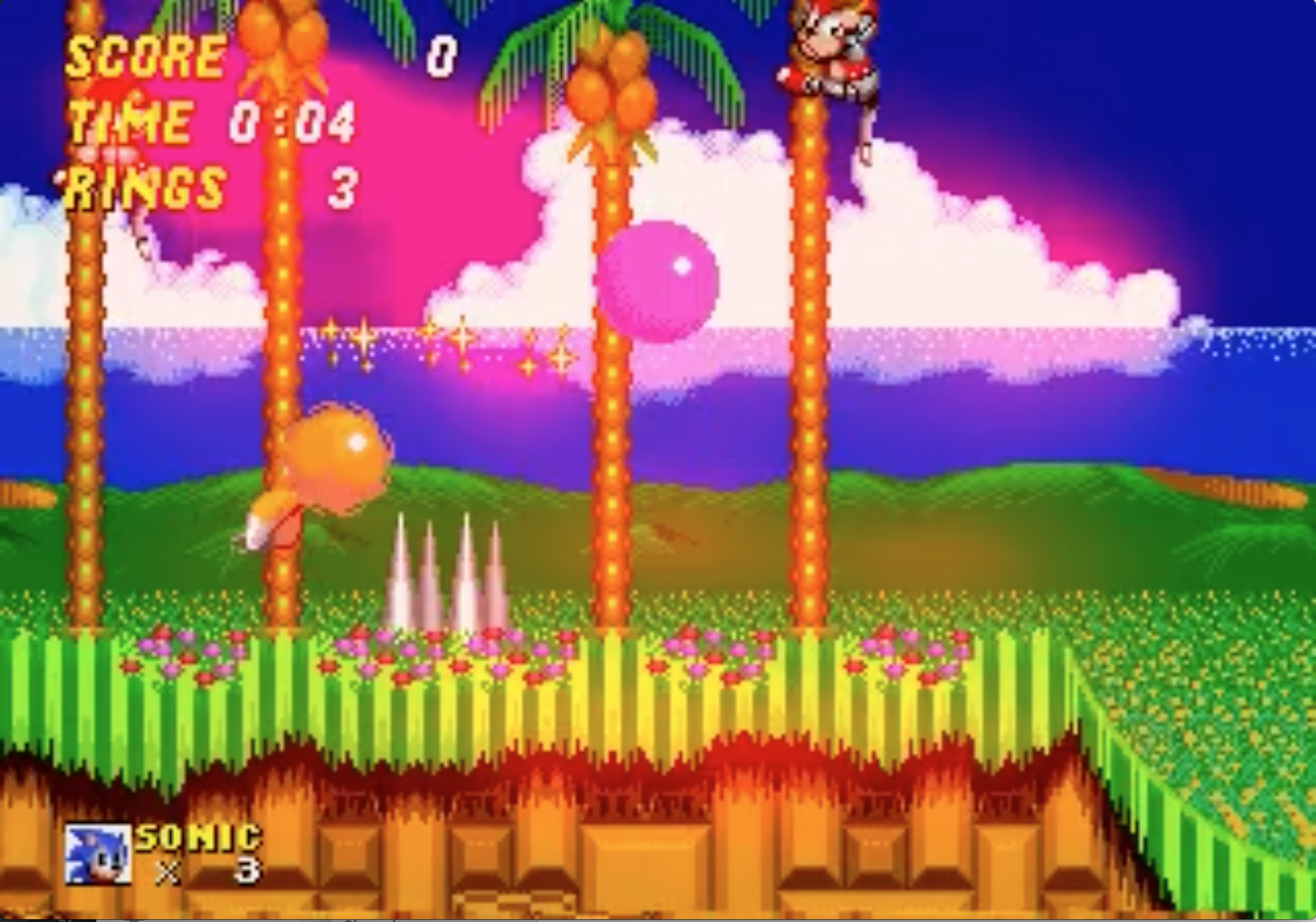}
	\includegraphics[width=0.45\textwidth]{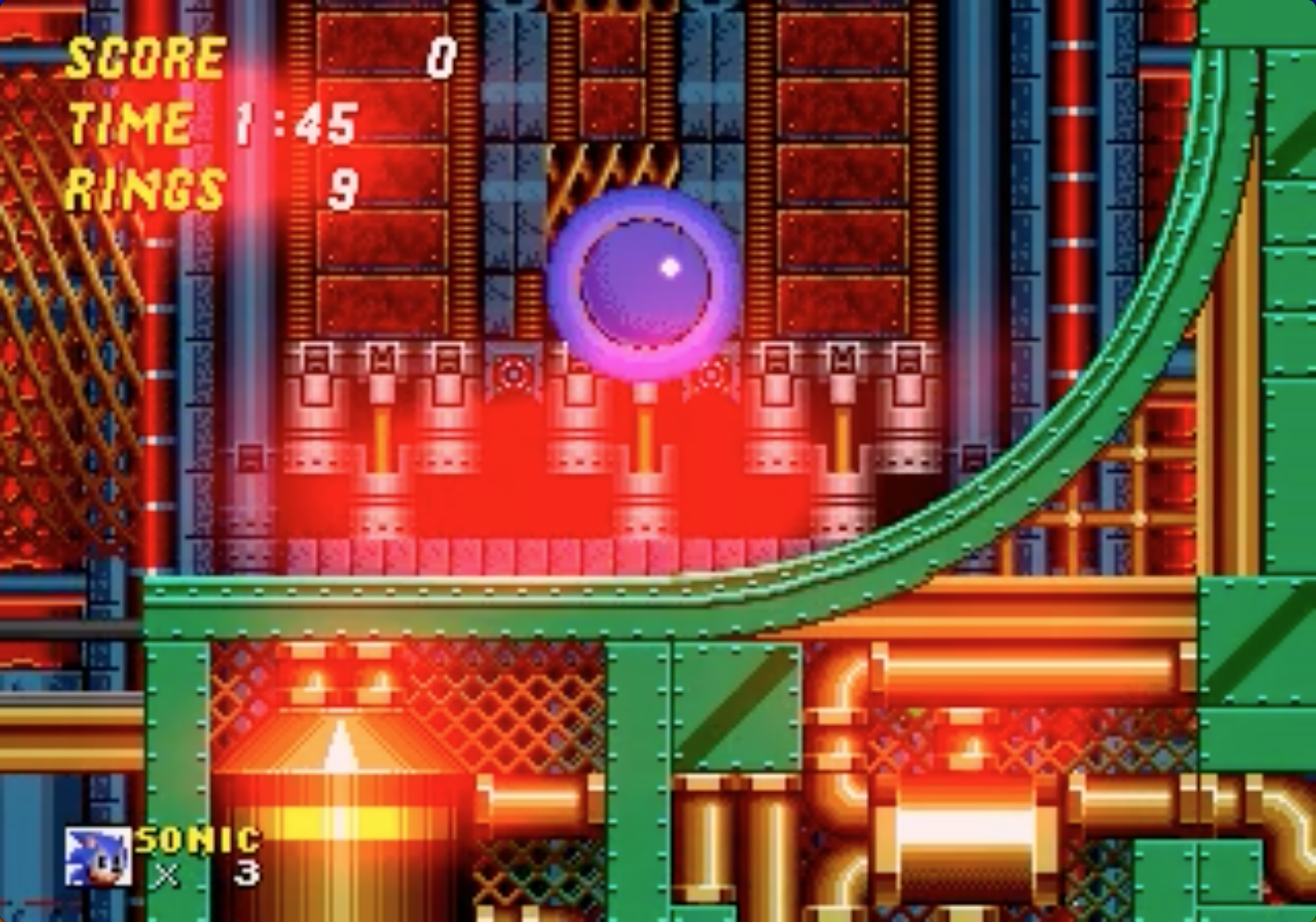}
\end{minipage}
\caption{Example of observational overfitting in Sonic. Saliency maps highlight (in red) the top-left timer and background objects because they are correlated with progress.}
\label{fig:intro_sonic} 
\end{figure}

Currently most architectures used in model-free RL are simple (with fewer than one million parameters) compared to the much larger and more complex ImageNet architectures used for classification. This is due to the fact that most RL environments studied either have relatively simple and highly structured images (e.g. Atari) compared to real world images, or conveniently do not directly force the agent to observe highly detailed images. For instance in large scale RL such as DOTA2 \citep{dota2} or Starcraft 2 \citep{sc2}, the agent observations are internal minimaps pertaining to object xy-locations, rather than human-rendered observations.

\subsection{What Happens in Observation Space?}

Several artificial benchmarks \citep{videobackground, transfer_cyclegan} have been proposed before to portray this notion of overfitting, where an agent must deal with a changing background - however, a key difference in our work is that we explicitly require the      ``background" to be \textbf{correlated with the progress} rather than loosely correlated (e.g. through determinism between the background and the game avatar) or not at all. This makes a more explicit connection to {\it causal inference} \citep{invariant_risk, conditional_variance, invariant_causal} where \textit{spurious correlations} between ungeneralizable features and progress may make training easy, but are detrimental to test performance because they induce false attributions. 

Previously, many works interpret the decision-making of an agent through saliency and other network visualizations \citep{saliency, atari-zoo} on common benchmarks such as Atari. Other recent works such as \citep{regularization_stack} analyze the interactions between noise-injecting explicit regularizations and the information bottleneck. However, our work is motivated by learning theoretic frameworks to capture this phenomena, as there is vast literature on understanding the generalization properties of SL classifiers \citep{vapnik2015uniform,mcallester1999pac,bartlett2002rademacher} and in particular neural networks \citep{neyshabur2015norm,dziugaite2017computing,exploring_gen,margin,compression}. For an RL policy with high-dimensional observations, we hypothesize its overfitting can come from more theoretically principled reasons, as opposed to purely good inductive biases on game images.

As an example of what may happen in high dimensional observation space, consider linear least squares regression task where given the set $X\in \R^{m\times d}$ and $Y\in \R^m$, we want to find $w\in \R^d$ that minimizes $\ell_{X,Y}(w) = \left\|Y-Xw\right\|^2$ where $m$ is the number of samples and $d$ is the input dimension. We know that if $X^\top X$ is full rank (hence $d\leq m$), $\ell_{X,Y}(.)$ has a unique global minimum $w^*=(X^\top X)^{-1}X^\top Y$. On the other hand if $X^\top X$ is not full rank (eg. when $m<d$), then there are many global minima $w^*$ such that $Y=Xw^*$~\footnote{Given any $X$ with full rank $X^\top X$, it is possible to create many global minima by projecting the data onto high dimensions using a semi-orthogonal matrix $Z\in \R^{d\times d'}$ where $d'>m\geq d$ and $ZZ^\top=I_{d}$. Therefore, we the loss $\ell_{XZ,Y}(w) = \left\|Y-XZw\right\|^2$ will have many global optima $w^*$ with $Y=XZw^*$.}. Luckily, if we use any gradient based optimization to minimize the loss and initialize with $w=0$, the solution will only span column spaces of $X$ and converges to minimum $\ell_2$ norm solution among all global minima due to implicit regularization \citep{gunasekar2017implicit}. Thus a high dimensional observation space with a low dimensional state space can induce multiple solutions, some of which are not generalizable to other functions or MDP's but one could hope that implicit regularization would help avoiding this issue. We analyze this case in further detail for the convex one-step LQR case in Section \ref{sec:exp_lqr} and Appendix \ref{lqr_gradient_dynamics}.

\subsection{Notation}

In the zero-shot framework for RL generalization, we assume there exists a distribution $\mathcal{D}$ over MDP's $\mathcal{M}$ for which there exists a fixed policy $\pi^{opt}$ that can achieve maximal return on expectation over MDP's generated from the distribution. An appropriate finite training set $\widehat{\mathcal{M}}_{train} = \{\mathcal{M}_{1}, \dots, \mathcal{M}_{n}\}$ can then be created by repeatedly randomly sampling $\mathcal{M} \sim \mathcal{D}$. Thus for a MDP $\mathcal{M}$ and any policy $\pi$, expected episodic reward is defined as $R_{\mathcal{M}}(\pi)$.

In many empirical cases, the support of the distribution $\mathcal{D}$ is made by parametrized MDP's where some process, given a parameter $\theta$, creates a mapping $\theta \rightarrow \mathcal{M}_{\theta}$ (e.g. through procedural generation), and thus we may simplify notation and instead define a distribution $\Theta$ that induces $\mathcal{D}$, which implies a set of samples $\widehat{\Theta}_{train} = \{\theta_{1}, \dots, \theta_{n}\}$ also induces a $\widehat{\mathcal{M}}_{train} = \{\mathcal{M}_{1}, \dots, \mathcal{M}_{n}\}$, and we may redefine reward as $R_{\mathcal{M_{\theta}}}(\pi) = R_{\theta}(\pi)$.

As a simplified model of the observational problem from Sonic, we can construct a mapping $\theta \rightarrow \mathcal{M}_{\theta}$ by first fixing a base MDP $\mathcal{M} = (\mathcal{S}, \mathcal{A}, r, \mathcal{T})$, which corresponds to state space, action space, reward, and transition. The only effect of $\theta$ is to introduce an additional \textit{observation function} $\phi_{\theta}: \mathcal{S} \rightarrow \mathcal{O}$, where the agent receives input from the high dimensional observation space $\mathcal{O}$ rather than from the state space $\mathcal{S}$. Thus, for our setting, $\theta$ actually parameterizes a POMDP family which can be thought of as simply a combination of a base MDP $\mathcal{M}$ and an observational function $\phi_{\theta}$, hence $\mathcal{M}_{\theta} = (\mathcal{M}, \phi_{\theta})$.

Let $\widehat{\Theta}_{train} = \{\theta_{1}, \dots, \theta_{n}\}$ be a set of $n$ i.i.d. samples from $ \Theta$, and suppose we train $\pi$ to optimize reward against $\{\mathcal{M}_{\theta} : \theta \sim \widehat{\Theta}_{train}\}$. The objective $J_{\widehat{\Theta}}(\pi) = \frac{1}{|\widehat{\Theta}_{train}|} \sum_{\theta_{i} \in \widehat{\Theta}_{train} }{R_{\theta_{i}}(\pi) }$ is the average reward over this empirical sample. We want to generalize to the distribution $\Theta$, which can be expressed as the average episode reward $R$ over the full distribution, i.e. $J_{\Theta}(\pi) = \E_{\theta \sim \Theta} \left[R_{\theta}(\pi)\right]$. Thus we define the generalization gap as $ J_{\widehat{\Theta}}(\pi) -  J_{\Theta}(\pi) $.

\subsection{Setup}
We can model the effects of Figure \ref{fig:intro_sonic} more generally, not specific to sidescroller games. We assume that there is an underlying \textit{state} $s$ (e.g. xy-locations of objects in a game), whose features may be very well structured, but that this state has been projected to a high dimensional observation space by $\phi_{\theta}$. To abstract the notion of generalizable and non-generalizable features, we construct a simple and natural candidate class of functions, where 
\begin{equation}
    \phi_{\theta}(s) = h(f(s), g_{\theta}(s))
\end{equation} 

\begin{figure}[t]

\begin{minipage}{1.0\textwidth}
    \centering
	\subfigure[]{\includegraphics[width=0.4\textwidth]{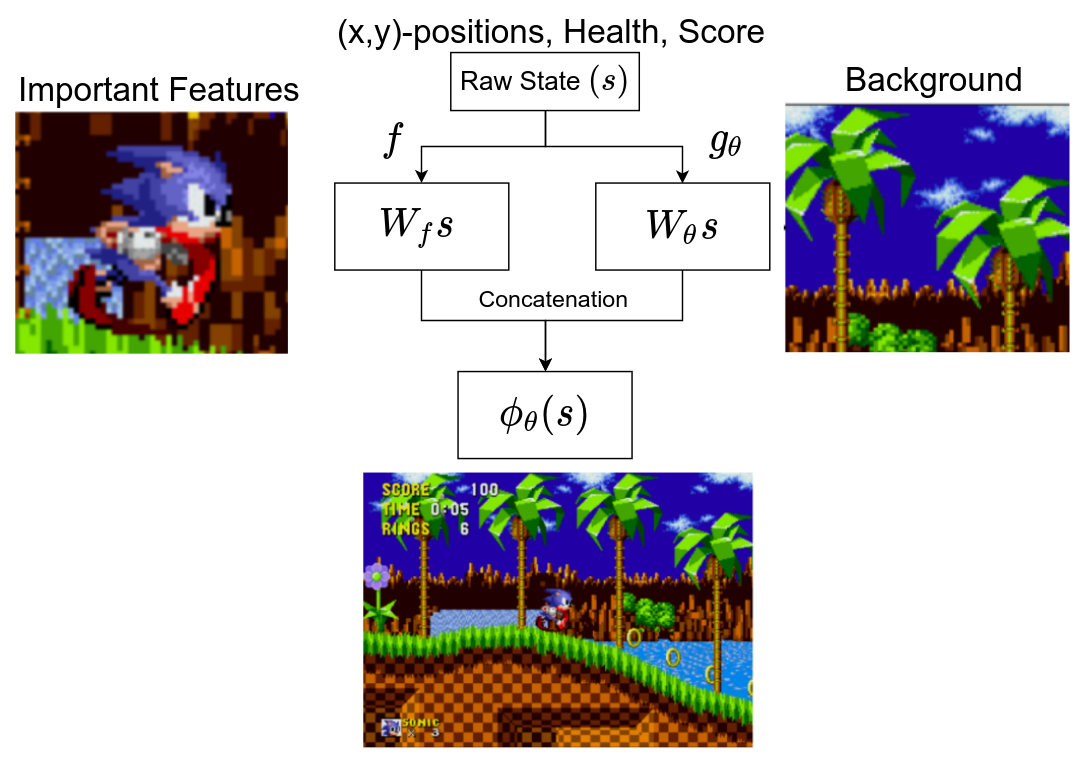}}
	\subfigure[]{\includegraphics[width=0.3\textwidth]{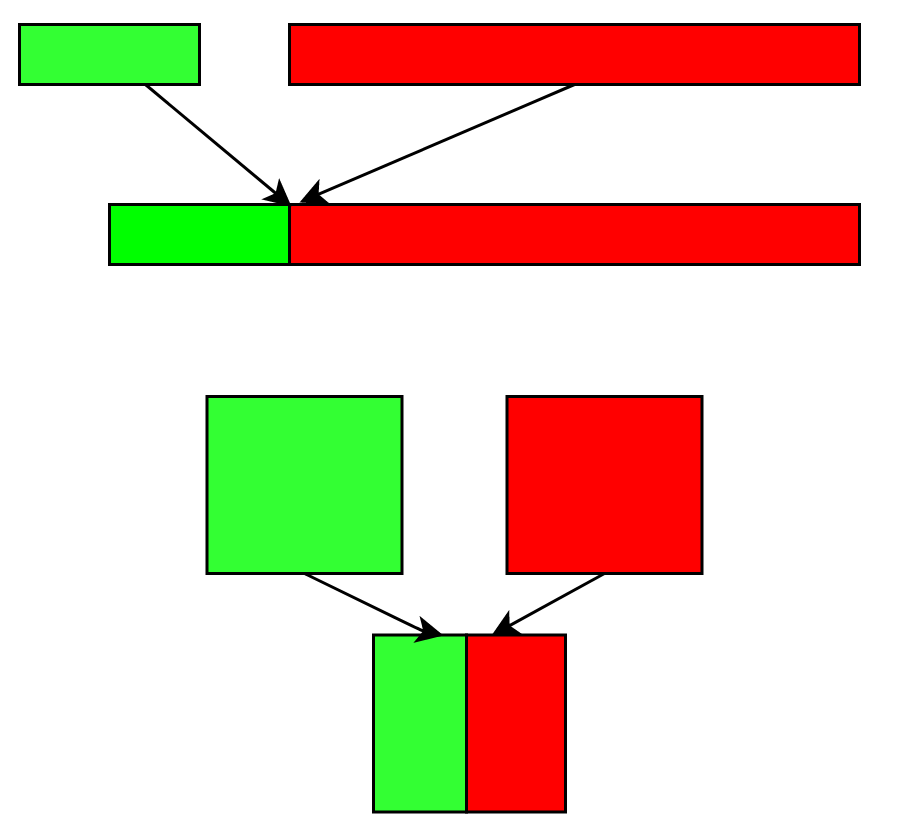}}
\end{minipage}
\caption{(a) Visual Analogy of the Observation Function. (b) Our combinations for 1-D (top) and 2-D (bottom) images for synthetic tasks.} \label{fig:fgh}
\end{figure}

In this setup, $f(\cdot)$ is a function invariant for the entire MDP population $\Theta$, while $g_{\theta}(\cdot)$ is a function dependent on the sampled parameter $\theta$. $h$ is a "combination" function which combines the two outputs of $f$ and $g$ to produce a final observation. While $f$ projects this latent data into salient and important, \textit{invariant} features such as the avatar, monsters, and items, $g_{\theta}$ projects the latent data to unimportant features that do not contribute to extra generalizable information, and can cause overfitting, such as the changing background or textures. A visual representation is shown in Figure \ref{fig:fgh}. This is a simplified but still insightful model relevant in more realistic settings. For instance, in settings where $g_\theta$ does matter, learning this separation and task-identification \citep{system_identification, pengsim2real} could potentially help fast adaptation in meta-learning \citep{maml}. From now on, we denote this setup as the \textit{$(f,g)$-scheme}.

This setting also leads to more interpretable generalization bounds - Lemma 2 of \citep{salesforce} provides a high probability $(1-\delta)$ bound for the ``intrinsic" generalization gap when $m$ levels are sampled: $gap \le  Rad_m(R_\Pi) + \mathcal{O}\left( \sqrt{ \frac{\log(1/\delta)}{m} } \right) $, where 
\begin{equation} \label{rademacher_eq}
Rad_m(R_\Pi) = \E_{(\theta_{1},...,\theta_{m}) \sim \Theta^m} \left[\E_{\sigma \in \{-1, +1\} }\left[\sup_{\pi\in\Pi} \frac{1}{m} \sum_{i=1}^{m} \sigma_{i} R_{\theta_{i}}(\pi) \right]\right]
\end{equation}
is the Rademacher Complexity under the MDP, where $\theta_{i}$ are the $\zeta_{i}$ parameters used in the original work, and the transition $\mathcal{T}$ and initialization $\mathcal{I}$ are fixed, therefore omitted, to accommodate our setting. 

The Rademacher Complexity term captures how invariant policies in the set $\Pi$ with respect to $\theta$. For most RL benchmarks, this is not interpretable due to multiple confounding factors such as the varying level dynamics. For instance, it is difficult to imagine what behaviors or network weights a policy would possess in order to produce the same total rewards, \textit{regardless of changing dynamics.}

However, in our case, because the environment parameters $\theta$ are only from $g_{\theta}$, the Rademacher Complexity is directly based on how much the policy ``looks at" $g_{\theta}$. More formally, let $\Pi^{*}$ be the set of policies $\pi^*$ which are not be affected by changes in $g_{\theta}$; i.e. $\nabla_{\theta} \pi^*(\phi_{\theta}(s)) = 0 \> \> \forall s$ and thus $R_\theta(\pi^*)=R_{const} \>\> \forall \theta$, which implies that the environment parameter $\theta$ has no effect on the reward; hence $ Rad_m(R_{\Pi^{*}}) = \E_{\sigma \in \{-1,+1\}} \left[\sup_{\pi^{*}\in \Pi^{*}} \frac{1}{m} \sum_{i=1}^{m} \sigma_{i} R_{const} \right] = 0$.

\subsection{Architecture and Implicit Regularization}
Normally in a MDP such as a game, the concatenation operation may be dependent on time (e.g. textures move around in the frame). In the scope of this work, we simplify the concatenation effect and assume $h(\cdot)$ is a static concatenation, but still are able to demonstrate insightful properties. We note that this inductive bias on $h$ allows \textit{explicit regularization} to trivially solve this problem, by penalizing a policy's first layer that is used to ``view" $g_{\theta}(s)$ (Appendix \ref{fig:fgh_mujoco_explicit_reg}), hence we only focus on implicit regularizations.

This setting is naturally attractive to analyzing architectural differences, as it is more closely related in spirit to image classifiers and SL. One particular line of work to explain the effects of certain architectural modifications in SL such as overparametrization and residual connections is \emph{implicit regularization} \citep{neyshabur2014search,gunasekar2017implicit,implicit_regularization}, as overparametrization through more layer depth and wider layers has proven to have no $\ell_{p}$-regularization equivalent \citep{matrix_factorization}, but rather precondition the dynamics during training. Thus, in order to fairly experimentally measure this effect, we always use fixed hyperparameters and only vary based on architecture. In this work, we only refer to \textit{architectural} implicit regularization techniques, which do not have a explicit regularization equivalent. Some techniques e.g. coordinate descent \citep{coordinate_descent} are equivalent to explicit $\ell_{1}$-regularization.

\section{Experiments} \label{sec:experiments}
\subsection{Overparamterized LQR}
\label{sec:exp_lqr}
We first analyze the case of the LQR as a surrogate for what may occur in deep RL, which has been done before for various topics such as sample complexity \citep{lqr_samplecomplex} and model-based RL \citep{lqr_model}. This is analogous to analyzing linear/logistic regression \citep{linear_analysis, simplified_pac} as a surrogate to understanding extensions to deep SL techniques \citep{pac_spectral, margin}. In particular, this has numerous benefits - the cost (negative of reward) function is deterministic, and allows exact gradient descent (i.e. the policy can differentiate through the cost function) as opposed to necessarily using stochastic gradients in normal RL, and thus can cleanly provide evidence of implicit regularization. Furthermore, in terms of gradient dynamics and optimization, LQR readily possesses nontrivial qualities compared to linear regression, as the LQR cost is a non-convex function but all of its minima are \textit{global minima} \citep{lqr}.

To show that overparametrization alone is an important implicit regularizer in RL, LQR allows the use of linear policies (and consequently also allows stacking linear layers) without requiring a stochastic output such as discrete Gumbel-softmax or for the continuous case, a parametrized Gaussian. This is setting able to show that overparametrization alone can affect gradient dynamics, and is not a consequence of extra representation power due to additional non-linearities in the policy. There have been multiple recent works on this linear-layer stacking in SL and other theoretical problems such as matrix factorization and matrix completion \citep{overparametrization, implicit_acceleration, gunasekar2017implicit}, but to our knowledge, we are the first to analyze this case in the context of RL generalization.

We explicitly describe setup as follows: for a given $\theta$, we let $f(s) = W_{c} \cdot s$, while $g_{\theta}(s) = W_{\theta} \cdot s$ where $W_{c}, W_{\theta}$ are semi-orthogonal matrices, to prevent information loss relevant to outputting the optimal action, as the state is transformed into the observation. Hence, if $s_{t}$ is the underlying state at time $t$, then the observation is $o_{t} = \begin{bmatrix}
  W_{c} \\
  W_{\theta} \\
\end{bmatrix} s_{t}$ and thus the action is $a_{t} = Ko_{t}$, where $K$ is the policy matrix. While $W_{c}$ remains a constant matrix, we sample $W_{\theta}$ randomly, using the ``level ID'' integer $\theta$ as the seed for random generation. In terms of dimensions, if $s$ is of shape $d_{state}$, then $f$ also projects to a shape of $d_{state}$, while $g_{\theta}$ projects to a much larger shape $d_{noise}$, implying that the observation to the agent is of dimension $d_{signal} + d_{noise}$. In our experiments, we set as default $(d_{signal}, d_{noise}) = (100, 1000)$.

If $P_{\star}$ is the unique minimizer of the original cost function, then the unique minimizer of the population cost is $K_{\star} = \cvectwo{ W_{c} P_{\star}^\T }{ 0 }^\T$. However, if we have a single level, then there exist multiple solutions, for instance $\cvectwo{ \alpha W_{c} P_{\star}^\T }{ (1-\alpha) W_{\theta} P_{\star}^\T}^\T \> \> \forall \alpha$. This extra bottom component $ W_{\theta} P_{\star}^\T$ causes overfitting. In Appendix \ref{lqr_gradient_dynamics}, we show that in the 1-step LQR case (which can be extended to convex losses whose gradients are linear in the input), gradient descent cannot remove this component, and thus overfitting necessarily occurs. 

Furthermore, we find that increasing $d_{noise}$ increases the generalization gap in the LQR setting. This is empirically verified in Figure \ref{fig:lqr_main} using an actual non-convex LQR loss, and the results suggest that the gap scales by $\bigO(\sqrt{d_{noise}})$. In terms of overparametrization, we experimentally added more $(100 \times 100)$ linear layers $K = K_{0}K_{1},...,K_{j}$ and increased widths for a 2-layer case (Figure \ref{fig:lqr_main}), and observe that both settings reduce the generalization gap, and also reduce the norms (spectral, nuclear, Frobenius) of the final end-to-end policy $K$, without changing its expressiveness. This suggests that gradient descent under overparametrization implicitly biases the policy towards a ``simpler" model in the LQR case.

As a surrogate model for deep RL, one may ask if the generalization gap of the final end-to-end policy $K$ can be predicted by functions of the layers $K_{0},...,K_{j}$. This is an important question as it is a required base case for predicting generalization when using stochastic policy gradient with nonlinear activations such as ReLU or Tanh. From examining the distribution of singular values on $K$ (Appendix \ref{fig:lqr_appendix}), we find that more layers does not bias the policy towards a low rank solution in the nonconvex LQR case, unlike \citep{overparametrization} which shows this does occur for matrix completion, and in general, convex losses. Ultimately, we answer in the negative: intriguingly, SL bounds have very little predictive power in the RL domain case.

To understand why SL bounds may be candidates for the LQR case, we note that as a basic smoothness bound $C(K) - C(K') \le \bigO(\norm{K - K'})$ (Appendix \ref{lqr_proof}) can lead to very similar reasoning with SL bounds. Since our setup is similar to SL in that ``LQR levels" which may be interpreted as a dataset, we use bounds of the form $\Delta \cdot \Phi$, where $\Delta$ is a ``macro" product term $ \Delta = \prod_{i=0}^{j} \norm{K_{i}} \ge \norm{ \prod_{i=0}^{j} K_{i} } $ derivable from the fact that $\norm{AB} \le \norm{A} \norm{B}$ in the linear case, and $\Phi$ is a \textit{weight-counting} term which deals with the overparametrized case, such as $\Phi = \sum_{i=0}^{j} \frac{\norm{K_{i}}_{F}^{2}}{ \norm{K_{i}}^{2}}$ \citep{pac_spectral} or $ \Phi = \left(\sum_{i=0}^{j} \left(\frac{\norm{K_{i}}_{1}}{ \norm{K_{i}} }\right)^{2/3} \right)^{3}$ \citep{margin}. However, the $\Phi$ terms increase too rapidly as shown in Figure \ref{fig:lqr_main}. Terms such as Frobenius product \citep{sizeindependent} and Fischer-Rao \citep{fischerrao} are effective for the SL depth case, but are both ineffective in the LQR depth case. For width, the only product which is effective is the nuclear norm product.

\begin{figure}[H]
\begin{minipage}{1.0\textwidth}

  \centering
	\includegraphics[keepaspectratio, width=0.43\textwidth]{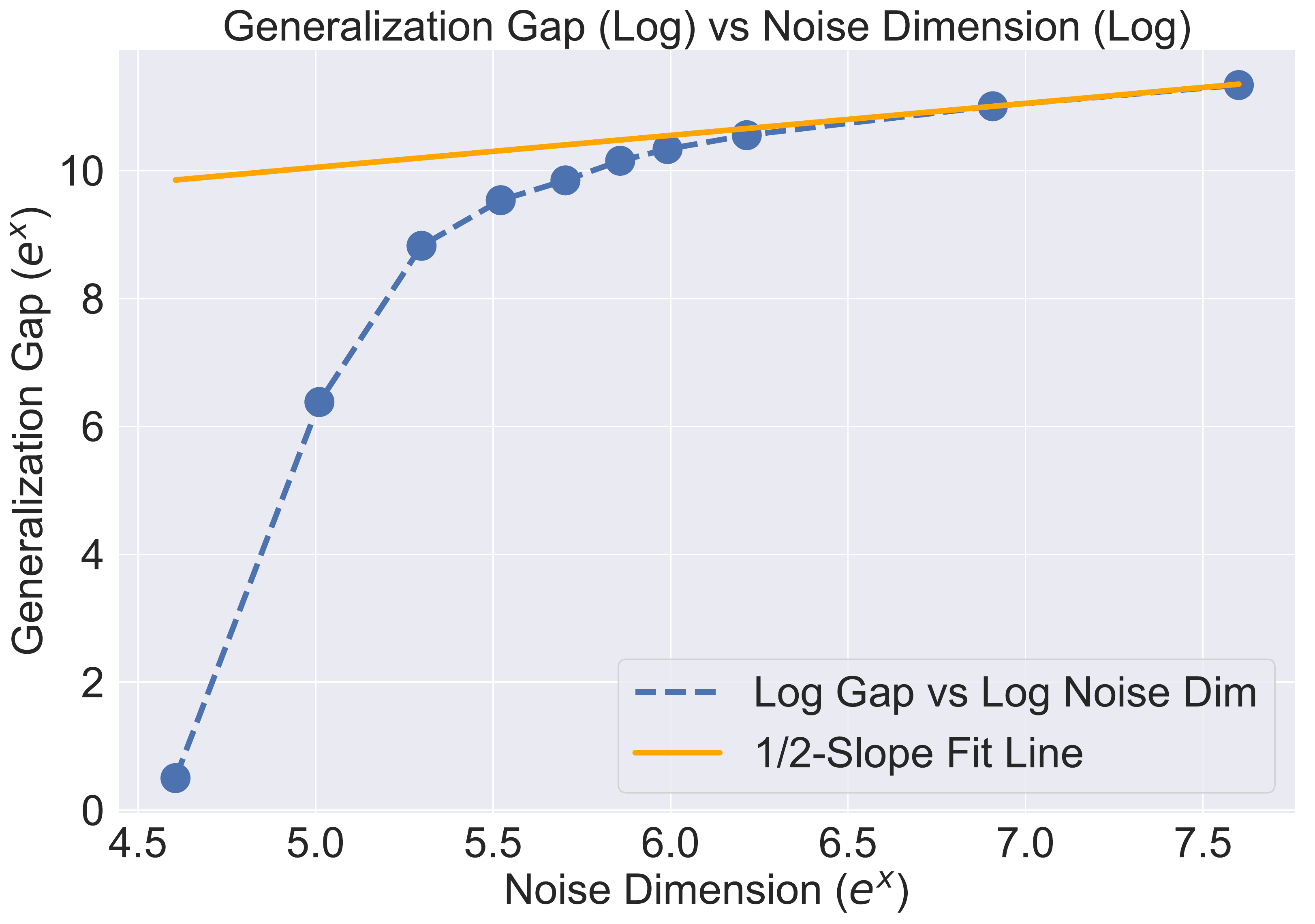}
	\includegraphics[keepaspectratio, width=0.5\textwidth]{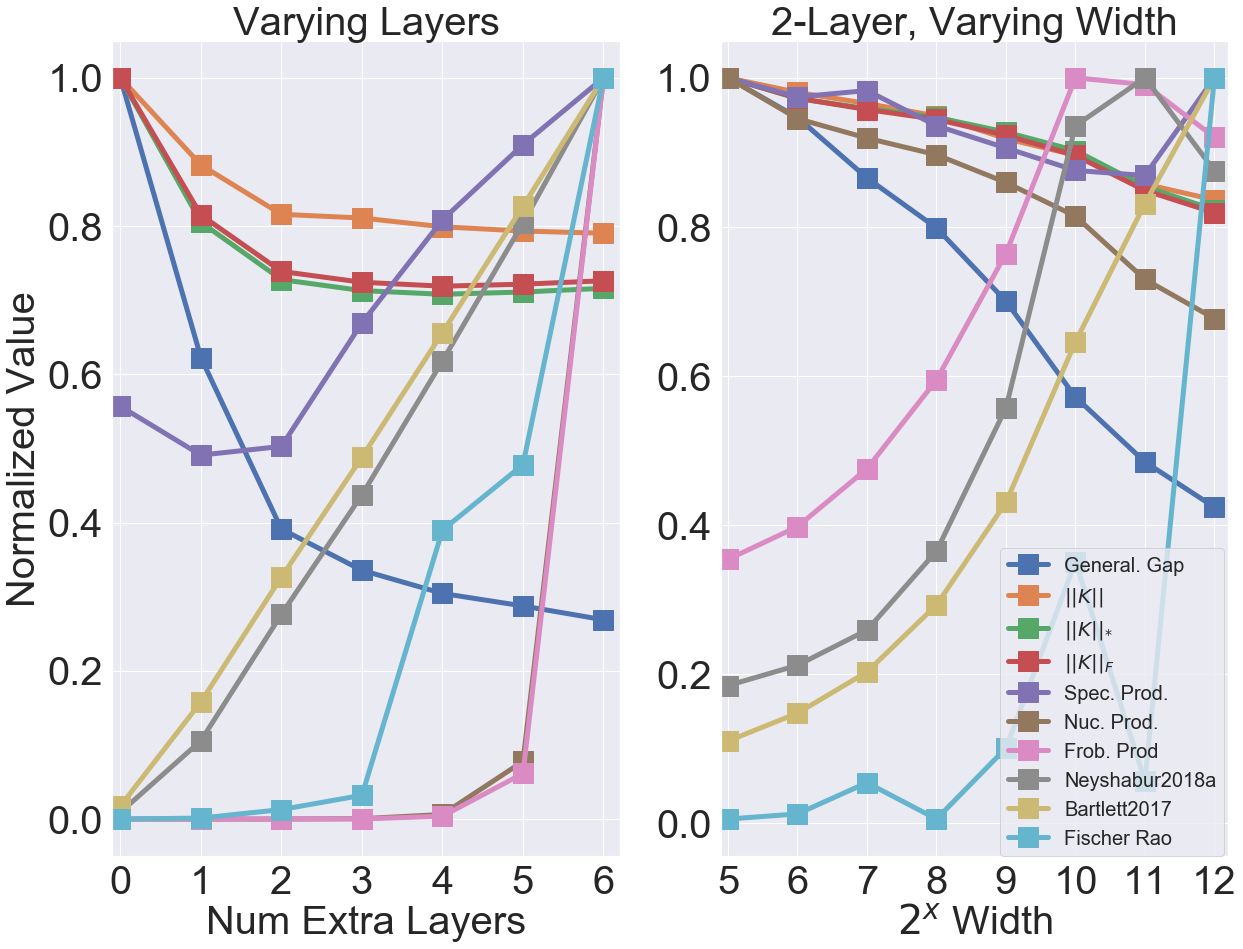}
  \caption{(\textbf{Left}) We show that the generalization gap vs noise dimension is tight as the noise dimension increases, showing that this bound is accurate. (\textbf{Middle} and \textbf{Right}) LQR Generalization Gap vs Number of Intermediate Layers. We plotted different $\Phi = \sum_{i=0}^{j} \frac{\norm{A}_{*}}{\norm{A}}$ terms without exponents, as powers of those terms are monotonic transforms since $\frac{\norm{A}_{*}}{\norm{A}} \ge 1\> \>  \forall A$ and $\norm{A}_{*} = \norm{A}_{F}, \norm{A}_{1}$. We see that the naive spectral bound diverges at 2 layers, and the weight-counting sums are too loose.}
    \label{fig:lqr_main}
\end{minipage}
\end{figure}

\subsection{Projected Gym Environments} \label{sec:fg_gym}
In Section \ref{sec:exp_lqr}, we find that observational overfitting exists and overparametrization potentially helps in the linear setting. In order to analyze the case when the underlying dynamics are nonlinear, we let $\mathcal{M}$ be a classic Gym environment and we generate a $\mathcal{M}_{\theta} = (\mathcal{M}, w_{\theta})$ by performing the exact same $(f,g)$-scheme as the LQR case, i.e. sampling $\theta$ to produce an observation function $w_{\theta}(s) = \begin{bmatrix}
  W_{c} \\
  W_{\theta} \\
\end{bmatrix} s$. We again can produce training/test sets of MDPs by repeatedly sampling $\theta$, and for policy optimization, we use Proximal Policy Gradient \citep{ppo}.

Although bounds on the smoothness term $R_{\theta}(\pi) - R_{\theta}(\pi')$ affects upper bounds on Rademacher Complexity (and thus generalization bounds), we have no such theoretical guarantees in the Mujoco case as it is difficult to analyze the smoothness term for complicated transitions such as Mujoco's physics simulator. However, in Figure \ref{fig:fgh_mujoco_main}, we can observe empirically that the underlying state dynamics has a significant effect on generalization performance as the policy nontrivially increased test performance such as in CartPole-v1 and Swimmer-v2, while it could not for others. This suggests that the Rademacher complexity and smoothness on the reward function vary highly for different environments.

\begin{figure}[H]
\begin{minipage}{1.0\textwidth}
  \centering

    \includegraphics[width=1.0\textwidth]{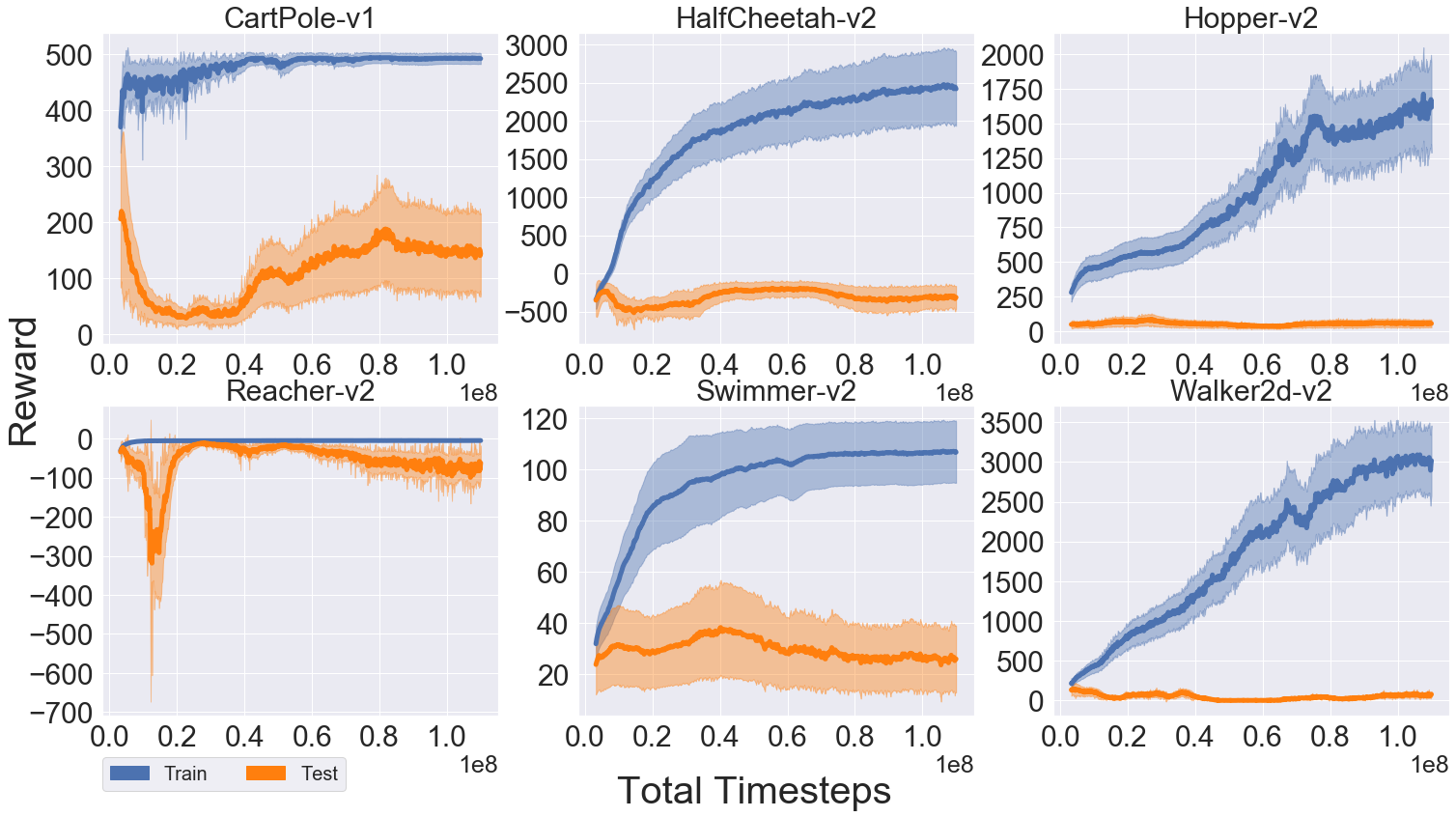}
\end{minipage}
  \caption{Each Mujoco task is given 10 training levels (randomly sampling $g_{\theta}$ parameters). We used a 2-layer ReLU policy, with 128 hidden units each. Dimensions of outputs of $(f,g)$ were $(30, 100)$ respectively.}
     \label{fig:fgh_mujoco_main}
\end{figure}
Even though it is common practice to use basic (2-layer) MLPs in these classic benchmarks, there are highly nontrivial generalization effects from modifying on this class of architectures. Our results in Figures \ref{fig:fgh_mujoco_depth}, \ref{fig:fgh_mujoco_width} show that increasing width and depth for basic MLPs can increase generalization and is significantly dependent on the choice of activation, and other implicit regularizations such as using residual layers can also improve generalization. Specifically, switching between ReLU and Tanh activations produces different results during overparametrization. For instance, increasing Tanh layers improves generalization on CartPole-v1, and width increase with ReLU helps on Swimmer-v2. Tanh is noted to consistently improve generalization performance. However, stacking Tanh layers comes at a cost of also producing vanishing gradients which can produce subpar training performance, for e.g. HalfCheetah. To allow larger depths, we use ReLU \textit{residual} layers, which also improves generalization and stabilizes training. 

\begin{figure}[H]
\begin{minipage}{1.0\textwidth}

  \centering

    \includegraphics[width=0.46\textwidth]{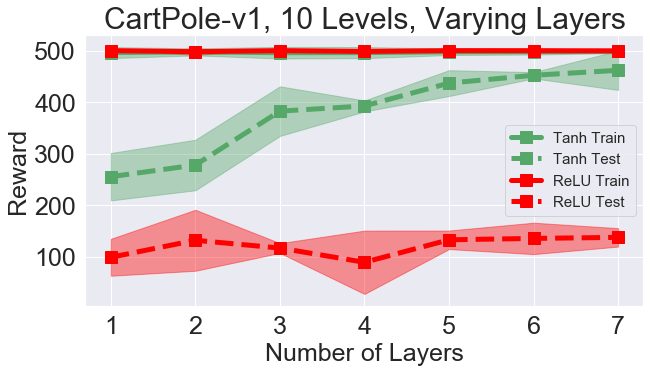}    \includegraphics[width=0.46\textwidth]{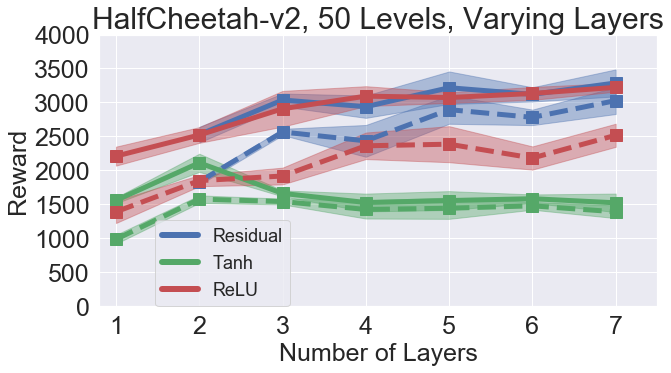}  
\end{minipage}
  \caption{Effects of Depth.}
     \label{fig:fgh_mujoco_depth}
\end{figure}

\begin{figure}[H]
\begin{minipage}{1.0\textwidth}

  \centering

    \includegraphics[width=0.45\textwidth]{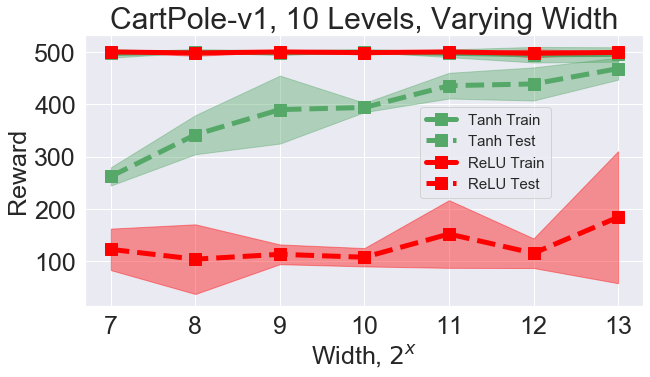}
    \includegraphics[width=0.45\textwidth]{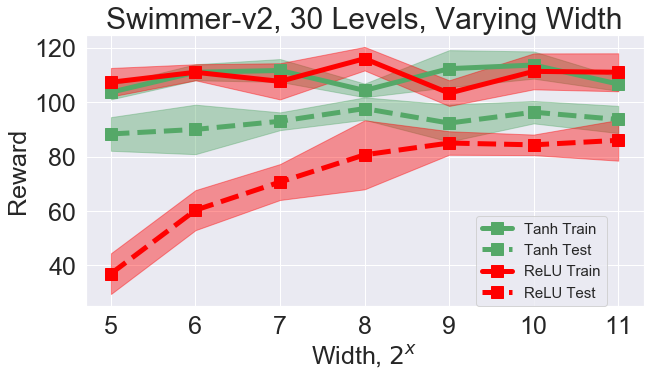}
\end{minipage}
  \caption{Effects of Width.}
     \label{fig:fgh_mujoco_width}
\end{figure}

Previous work \citep{overfittingRL} did not find such an architectural pattern for GridWorld environments, suggesting that this effect may exist primarily for observational overfitting cases. While there have been numerous works which avoid overparametrization on simplifying policies \citep{simplicity, simplerandom} or compactifying networks \citep{toeplitz, weight_agnostic}, we instead find that there are generalization benefits to overparametrization even in the nonlinear control case.

\subsection{Deconvolutional Projections}\label{gym_deconv}
From the above results with MLPs, one may wonder if similar results may carry to convolutional networks, as they are widely used for vision-based RL tasks. As a ground truth reference for our experiment, we the canonical networks proven to generalize well in the dataset CoinRun, which are from worst to best, NatureCNN \cite{dqn}, IMPALA \cite{impala}, and IMPALA-LARGE (IMPALA with more residual blocks and higher convolution depths), which have respective parameter numbers (600K, 622K, 823K).

We setup a similar $(f,g)$-scheme appropriate for the inductive bias of convolutions, by passing the vanilla Gym 1D state corresponding to joint locations and velocities, through multiple deconvolutions. We do so rather than using the RGB image from \texttt{env.render()} to enforce that the actual state is indeed low dimensional and minimize complications in experimentation, as e.g. inference of velocity information would require frame-stacking.

Specifically in our setup, we project the actual state to a fixed length, reshaping it into a square, and replacing $f$ and $g_{\theta}$ both with the same orthogonally-initialized \emph{deconvolution} architecture to each produce a $84 \times 84$ image (but $g_{\theta}$'s network weights are still generated by $\theta_{1},...,\theta_{m}$ similar to before). We combine the two outputs by using one half of the "image" from $f$, and one half from $g_{\theta}$, as shown back in Figure \ref{fig:fgh}.

\begin{figure}[H]
  \centering
	\includegraphics[keepaspectratio, width=1.0\textwidth]{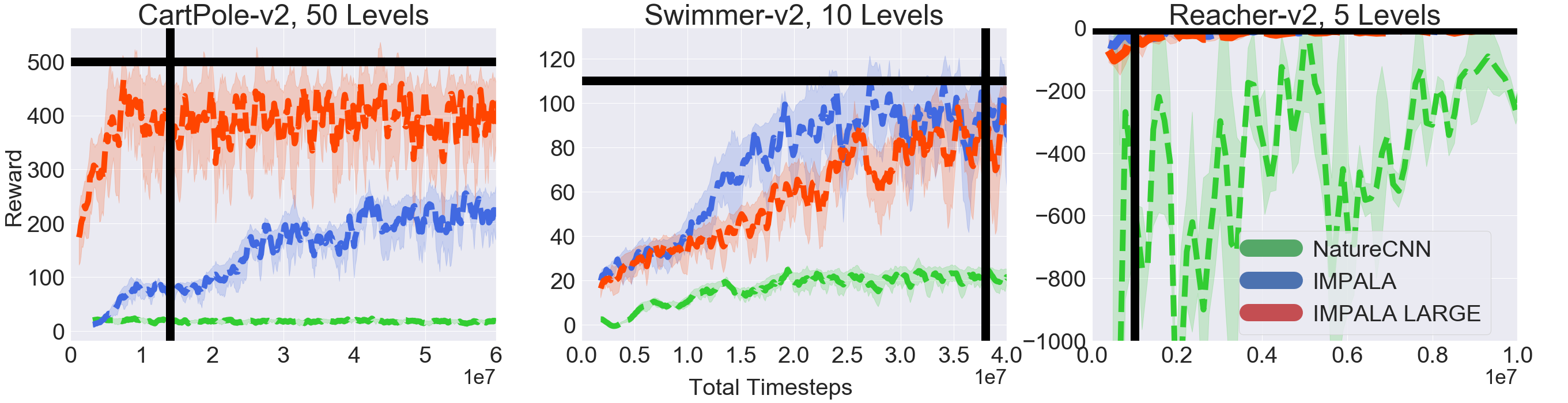}
  \caption{Performance of architectures in the synthetic Gym-Deconv dataset. To cleanly depict test performance, training curves are replaced with horizontal (max env. reward) and vertical black lines (avg. timestep when all networks reach max reward).} 
    \label{fig:gym_deconv_main}
\end{figure}

Figure \ref{fig:gym_deconv_main} shows that the same ranking between the three architectures exists as well on the Gym-Deconv dataset. We show that generalization ranking among NatureCNN/IMPALA/IMPALA-LARGE remains the same regardless of whether we use our synthetic constructions or CoinRun. This suggests that the RL generalization quality of a convolutional architecture is not limited to \textit{real world data}, as our test purely uses numeric observations, which are not based on a human-prior. From these findings, one may conjecture that these RL generalization performances are highly correlated and may be due to common factors. 

\begin{figure}[H]
  \centering
	\includegraphics[keepaspectratio, width=0.97\textwidth]{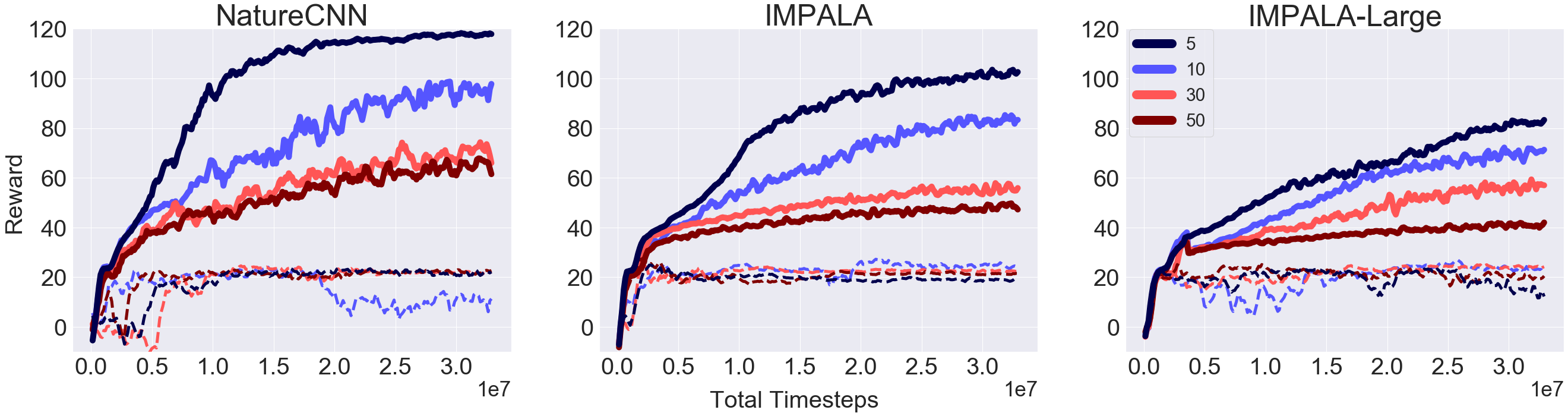}
  \caption{We only show the observation from $g_{\theta}(s)$, which tests memorization capacity on Swimmer-v2.} 
     \label{fig:gym_deconv_memorization}
\end{figure}

One of these factors we suggest is due to implicit regularization. In order to support this claim, we perform a memorization test by only showing $g_{\theta}$'s output to the policy. This makes the dataset impossible to generalize to, as the policy network cannot invert every single observation function $\{g_{\theta_{1}}(\cdot), g_{\theta_{2}}(\cdot),..., g_{\theta_{n}}(\cdot)\}$ simultaneously. \citet{overfittingRL} also constructs a memorization test for mazes and grid-worlds, and showed that more parameters increased the memorization ability of the policy. While it is intuitive that more parameters would incur more memorization, we show in Figure \ref{fig:gym_deconv_memorization} that this is perhaps not a complete picture when implicit regularization is involved.

Using the underlying MDP as a Swimmer-v2 environment, we see that NatureCNN, IMPALA, IMPALA-LARGE have reduced memorization performances. IMPALA-LARGE, which has more depth parameters and more residual layers (and thus technically has more capacity), memorizes \textit{less} than IMPALA due its inherent inductive bias. We perform another deconvolution memorization test, using an LQR as the underlying MDP. While Figure \ref{fig:gym_deconv_memorization} shows that memorization performance is dampened, Figure \ref{fig:lqr_conv_memorization} shows that there can exist specific hard limits to memorization. Specifically, NatureCNN can memorize 30 levels, but not 50; IMPALA can memorize 2 levels but not 5; IMPALA-LARGE cannot memorize 2 levels at all. This supports the hypothesis that these extra residual blocks may be implicitly regularizing the network. This is corroborated by the fact that residual layers are also explained as an \textit{implicit regularization} technique for SL.

\begin{figure}[H]

  \centering
    \includegraphics[keepaspectratio, width=0.9\textwidth]{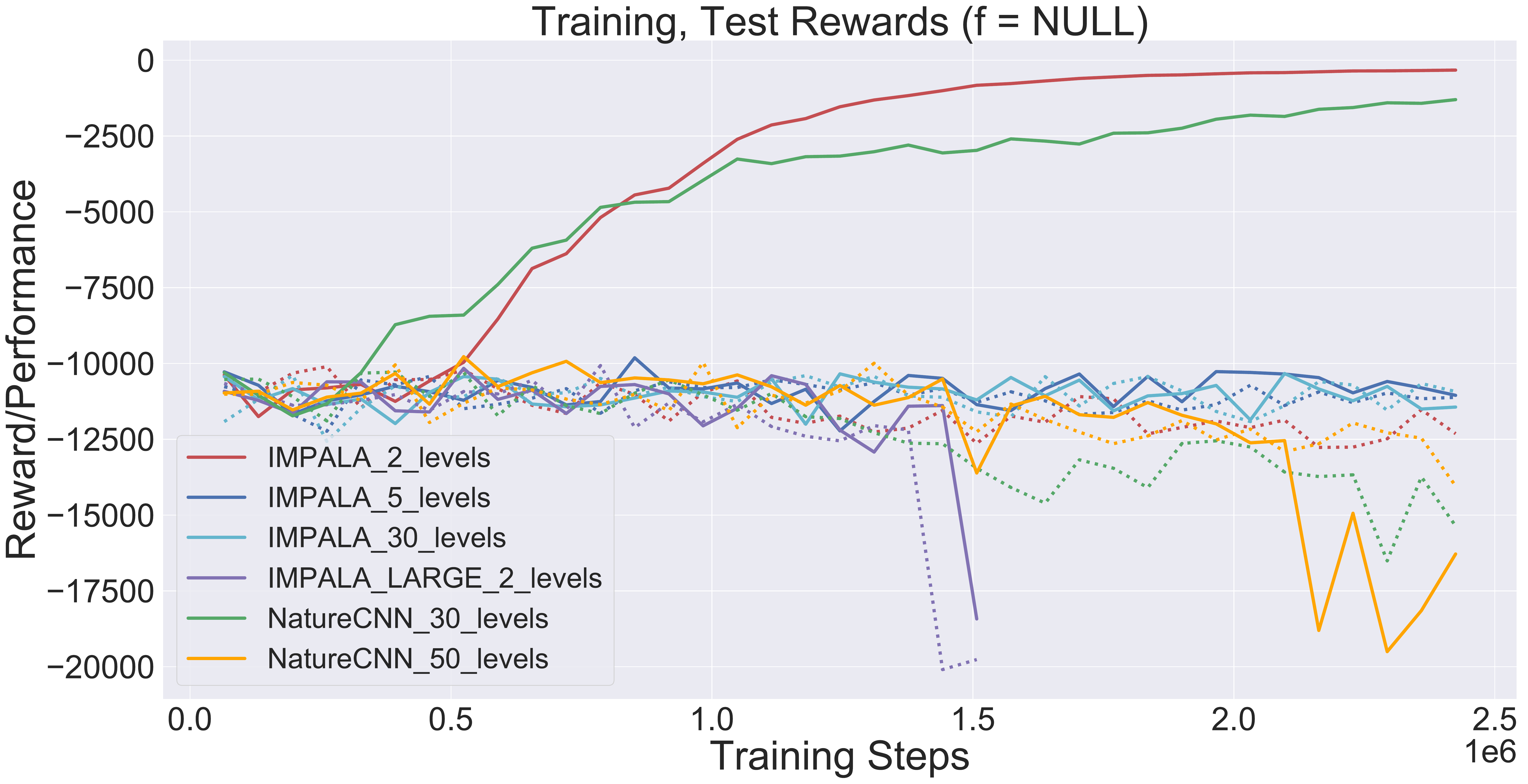}
  \caption{Memorization Test using LQR as underlying MDP, showing hard limits.} 
    \label{fig:lqr_conv_memorization}
\end{figure}

\subsection{Observational Overfitting in CoinRun} \label{sec:coinrun}
\begin{wraptable}[26]{r}{0.4\linewidth}
\centering
\begin{tabular}{ | l | l | l |} 
\hline
Architecture & \makecell[l]{Coinrun-100 \\ (Train, Test)} \\ \hline \hline
AlexNet-v2  & (10.0, 3.0) \\ \hline 
CifarNet  & (10.0, 3.0) \\ \hline 
\makecell[l]{IMPALA-\\LARGE-BN} & (10.0, 5.5)   \\ \hline
Inception-ResNet-v2 & (10.0, 6.5)     \\ \hline
Inception-v4 & (10.0, 6.0)   \\ \hline
MobileNet-v1 & (10.0, 5.5)   \\ \hline 
MobileNet-v2 & (10.0, 5.5)      \\ \hline 
\makecell[l]{NASNet-\\CIFAR} & (10.0, 4.0)     \\ \hline
\makecell[l]{NASNet-\\Mobile} &  (10.0, 4.5)   \\ \hline
ResNet-v2-50 & (10.0, 5.5)   \\ \hline
ResNet-v2-101 & (10.0, 5.0)   \\ \hline
ResNet-v2-152 & (10.0, 5.5)   \\ \hline
RMC32x32 & (9.0, 2.5)  \\ \hline
ShakeShake & (10.0, 6.0)   \\ \hline  
VGG-A & (9.0, 3.0)  \\ \hline
VGG-16 & (9.0, 3.0)    \\ \hline
\end{tabular}
\caption{Raw Network Performance (rounded to nearest 0.5) on CoinRun, 100 levels. Images scaled to default image sizes ($32 \times 32$ or $224 \times 224$) depending on network input requirement. See Appendix \ref{fig:largearch} for training curves.}
\label{imagenet_coinrun}
\end{wraptable}

For reference, we also extend the case of large-parameter convolutional networks using ImageNet networks. We experimentally verify in Table \ref{imagenet_coinrun} that large ImageNet models perform very differently in RL than SL. We note that default network with the highest test reward was IMPALA-LARGE-BN (IMPALA-LARGE, with Batchnorm) at $\approx 5.5$ test score. 

In order to verify that this is inherently a \textit{feature learning} problem rather than a \textit{combinatorial problem} involving objects, such as in \citep{rmc}, we show that state-of-the-art attention mechanisms for RL such as Relational Memory Core (RMC) using pure attention on raw $32 \times 32$ pixels does not perform well here, showing that a large portion of generalization and transfer must be based on correct convolutional setups.

\subsubsection{Overparametrization}
We further test our overparametrization hypothesis from Sections \ref{sec:exp_lqr}, \ref{sec:fg_gym} to the CoinRun benchmark, using unlimited levels for training. For MLP networks, we downsized CoinRun from native $64 \times 64$ to $32 \times 32$, and flattened the $32 \times 32 \times 3$ image for input to an MLP. Two significant differences from the synthetic cases are that 1. Inherent dynamics are changing per level in CoinRun, and 2. The relevant and irrelevant CoinRun features change locations across the 1-D input vector. Regardless, in Figure \ref{fig:coinrun_overparm}, we show that overparametrization can still improve generalization in this more realistic RL benchmark, much akin to \citep{gen_overparametrization} which showed that overparametrization for MLP's improved generalization on $32 \times 32 \times 3$ CIFAR-10.
\\ \\ 
\begin{figure}[H]
  \centering
	\includegraphics[keepaspectratio, width=0.49\textwidth]{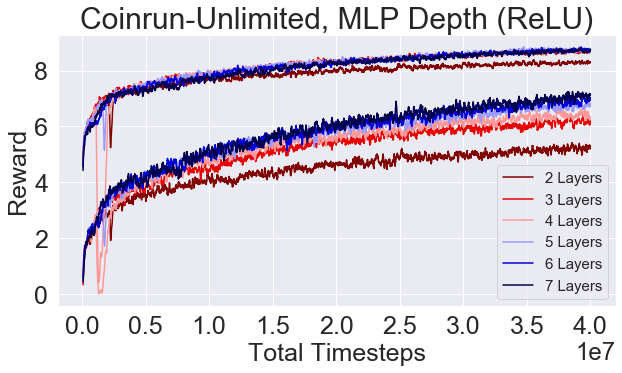}
	\includegraphics[keepaspectratio, width=0.49\textwidth]{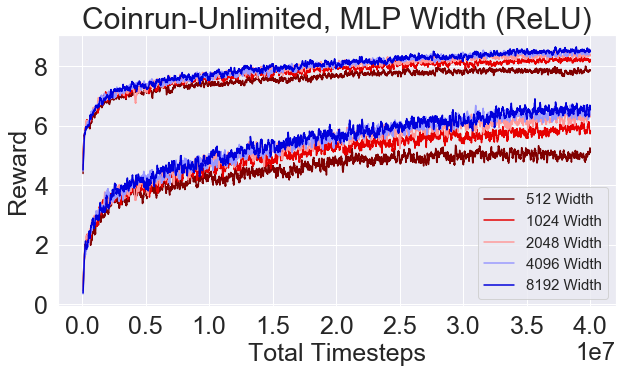}
  \caption{Overparametrization improves generalization for CoinRun.} 
    \label{fig:coinrun_overparm}
\end{figure}

\subsubsection{Do State-Action Margin Distributions Predict Generalization in RL?} \label{margin_dist}
A key question is how to predict the generalization gap only from the training phase. A particular set of metrics, popular in the SL community are \emph{margin distributions} \citep{margin_distributions, margin}, as they deal with the case for softmax categorical outputs which do not explicitly penalize the weight norm of a network, by normalizing the "confidence" margin of the logit outputs. While using margins on state-action pairs (from an on-policy replay buffer) is not technically rigorous, one may be curious to see if they have predictive power, especially as MLP's are relatively simple to norm-bound, and as seen from the LQR experiments, the norm of the policy may be correlated with the generalization performance.

For a policy, the the margin distribution will be defined as $(x,y) \rightarrow \frac{ F_{\pi}(x)_{y}  - \max_{i\neq y} F_{\pi}(x)_{i} }{\mathcal{R}_{\pi} \norm{S}_{2}/n}$, where $F_{\pi}(x)_{y}$ is the logit value (before applying softmax) of output $y$ given input $x$, and $S$ is the matrix of states in the replay buffer, and $\mathcal{R}_{\pi}$ is a norm-based Lipschitz measure on the policy network logits. In general, $\mathcal{R}_{\pi}$ is a bound on the Lipschitz constant of the network but can also be simply expressions which allow the margin distribution to have high correlation with the generalization gap. Thus, we use measures inspired by recent literature in SL in which we designate Spectral-L1, Distance, and Spectral-Frobenius measures for $\mathcal{R}_{\pi}$, and we replace the classical supervised learning pair $(x,y) = (s,a)$ with the state-action pairs found on-policy. \footnote{We removed the training sample constant $m$ from all original measures as this is ill-defined for the RL case, when one can generate infinitely many $(s,a)$ pairs. Furthermore, we used the original $\norm{W_{i}}_{1}$ in the numerator found in the first version of \citep{margin} rather than the current $\norm{W_{i}}_{1,2}$.}

The expressions for $\mathcal{R}_{\pi}$ (after removing irrelevant constants) are as follows, with their analogous papers:
\begin{compactenum}
    \item Spectral-L1 measure: $ \left( \prod_{i=1}^{d} \norm{W_{i}}\right) \left(\sum_{i=1}^{d} \frac{ \norm{W_{i}}^{2/3}_{1}  }{ \norm{W_{i}}^{2/3} } \right)^{3/2} $  \citep{margin}
    \item Distance measure: $\sqrt{\sum_{i=1}^{d} \norm{W_{i} - W_{i}^{0}}_{F}^{2}}$ \citep{nagarajan2019generalization}
    \item Spectral-Fro measure: $ \sqrt{ \ln(d)\prod_{i=1}^{d}  \norm{W_{i}}^{2} \sum_{j=1}^{d} \frac{\norm{W_{j} - W_{j}^{0}}^{2}_{F}}{ \norm{W_{j}}^{2} }} $  \citep{pac_spectral}
\end{compactenum}

We verify in Figure \ref{fig:coinrun_margins_convergence}, that indeed, simply measuring the raw norms of the policy network is a poor way to predict generalization, as it generally increases even as training begins to plateau. This is inherently because the softmax on the logit output does not penalize arbitrarily high logit values, and hence proper normalization is needed. 

The margin distribution converges to a fixed distribution even long after training has plateaued. However, unlike SL, the margin distribution is conceptually not fully correlated with RL generalization on the total reward, as a policy overconfident in some state-action pairs does not imply bad testing performance. This correlation is stronger if there are Lipschitz assumptions on state-action transitions, as noted in \citep{salesforce}. For empirical datasets such as CoinRun, a metric-distance between transitioned states is ill-defined however. Nevertheless, the distribution over the on-policy replay buffer at each policy gradient iteration is a rough measure of overall confidence. 

\begin{figure}[H]
  \centering
    \includegraphics[keepaspectratio, width=0.5\textwidth]{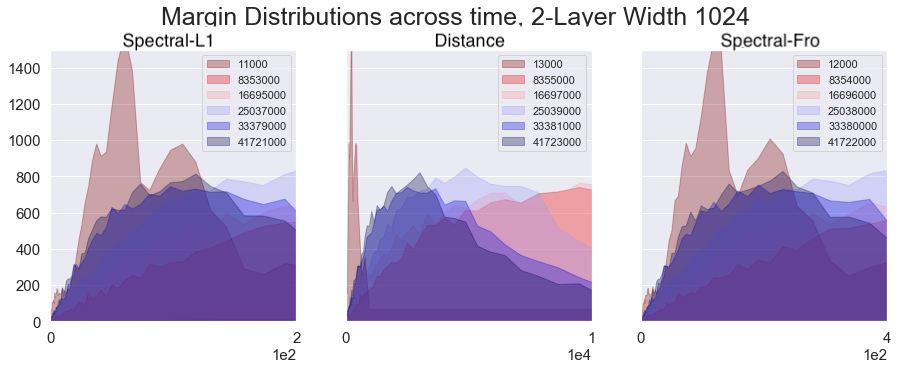}
    \includegraphics[keepaspectratio, width=0.45\textwidth]{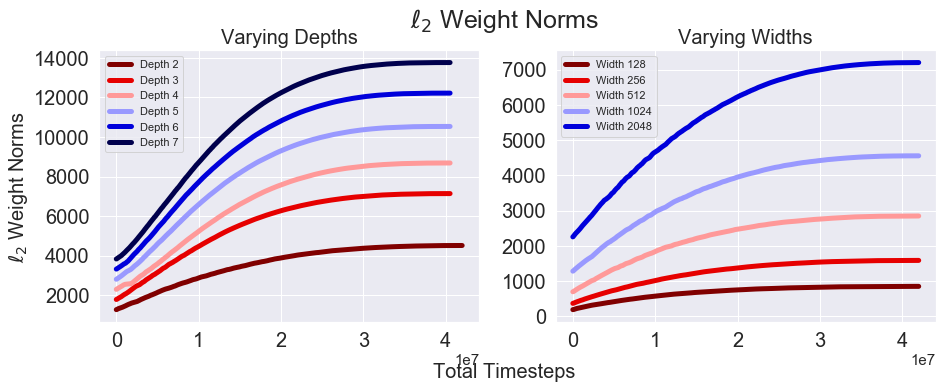}
  \caption{Margin Distributions across training.}
    \label{fig:coinrun_margins_convergence}
\end{figure}

We note that there are two forms of modifications, \textit{network dependent} (explicit modifications to the policy - norm regularization, dropout, etc.) and \textit{data dependent} (modifications only to the data in the replay buffer - action stochasticity, data augmentation, etc.). Ultimately however, we find that current norm measures $R_{\pi}$ become too dominant in the fraction, leading to the monotonic decreases in the means of the distributions as we increase parametrization.

\begin{figure}[H]

  \centering
    \includegraphics[keepaspectratio, width=1.0\textwidth]{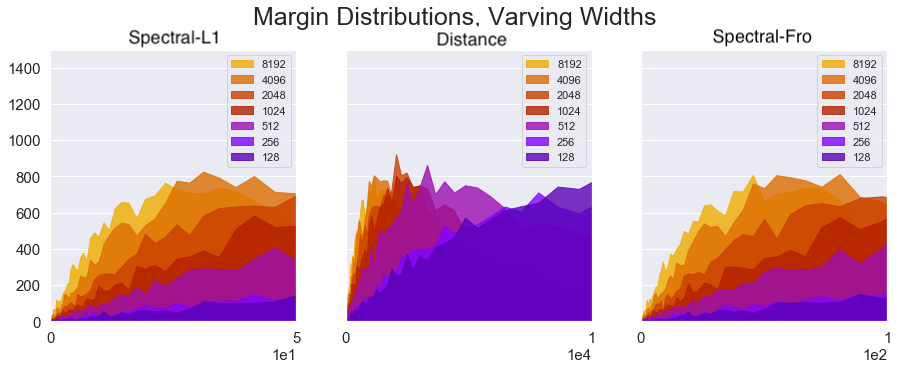}\\
    \includegraphics[keepaspectratio, width=1.0\textwidth]{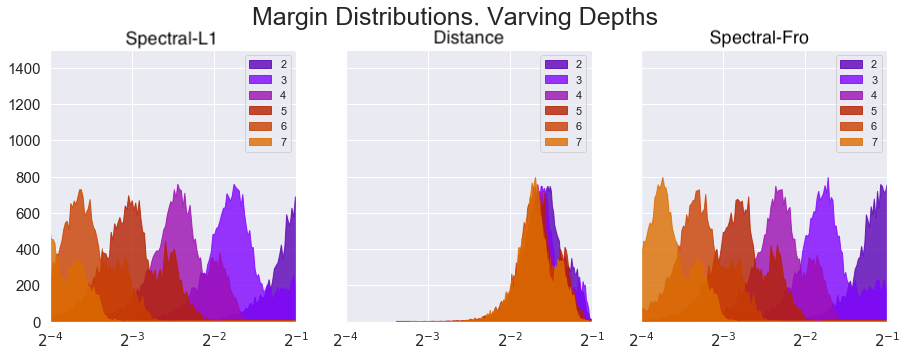}
    \label{fig:coinrun_margins}
  \caption{Margin Distributions at the end of training.} 
\end{figure}

This, with the bound results found earlier for the LQR case, suggests that current norm measures are simply too loose for the RL case even though we have shown overparametrization helps generalization in RL, and hopefully this motivates more of the study of such theory.

\section{Conclusion}
We have identified and isolated a key component of overfitting in RL as the particular case of ``observational overfitting", which is particularly attractive for studying architectural implicit regularizations. We have analyzed this setting extensively, by examining 3 main components:
\begin{compactenum}
\item The analytical case of LQR and linear policies under exact gradient descent, which lays the foundation for understanding theoretical properties of networks in RL generalization.
\item The empirical but principled Projected-Gym case for both MLP and convolutional networks which demonstrates the effects of neural network policies under nonlinear environments.
\item The large scale case for CoinRun, which can be interpreted as a case where relevant features are moving across the input, where empirically, MLP overparametrization also improves generalization.
\end{compactenum}
We noted that current network policy bounds using ideas from SL are unable to explain overparametrization effects in RL, which is an important further direction. In some sense, this area of RL generalization is an extension of static SL classification from adding extra RL components. For instance, adding a nontrivial ``combination function" between $f$ and $g_{\theta}$ that is dependent on time (to simulate how object pixels move in a real game) is both an RL generalization issue and potentially video classification issue, and extending results to the memory-based RNN case will also be highly beneficial. 

Furthermore, it is unclear whether such overparametrization effects would occur in off-policy methods such as Q-learning and also ES-based methods. In terms of architectural design, recent works \citep{NTK, conv_GP, wide_neural_linear} have shed light on the properties of asymptotically overparametrized neural networks in the infinite width and depth cases and their performance in SL. Potentially such architectures (and a corresponding training algorithm) may be used in the RL setting which can possibly provide benefits, one of which is generalization as shown in this paper. We believe that this work provides an important initial step towards solving these future problems.

\section*{Acknowledgements}
We would like to thank John Schulman for very helpful guidance over the course of this work. We also wish to thank Chiyuan Zhang, Ofir Nachum, Aurick Zhou, Daniel Seita, Alexander Irpan, and the OpenAI team for fruitful comments and discussions during the course of this work.

\clearpage
\bibliographystyle{iclr2020_conference}
\bibliography{references}

\begin{thebibliography}{3}
\providecommand{\natexlab}[1]{#1}
\providecommand{\url}[1]{\texttt{#1}}
\expandafter\ifx\csname urlstyle\endcsname\relax
  \providecommand{\doi}[1]{doi: #1}\else
  \providecommand{\doi}{doi: \begingroup \urlstyle{rm}\Url}\fi

\bibitem[Bengio \& LeCun(2007)Bengio and LeCun]{Bengio+chapter2007}
Yoshua Bengio and Yann LeCun.
\newblock Scaling learning algorithms towards {AI}.
\newblock In \emph{Large Scale Kernel Machines}. MIT Press, 2007.

\bibitem[Goodfellow et~al.(2016)Goodfellow, Bengio, Courville, and
  Bengio]{goodfellow2016deep}
Ian Goodfellow, Yoshua Bengio, Aaron Courville, and Yoshua Bengio.
\newblock \emph{Deep learning}, volume~1.
\newblock MIT Press, 2016.

\bibitem[Hinton et~al.(2006)Hinton, Osindero, and Teh]{Hinton06}
Geoffrey~E. Hinton, Simon Osindero, and Yee~Whye Teh.
\newblock A fast learning algorithm for deep belief nets.
\newblock \emph{Neural Computation}, 18:\penalty0 1527--1554, 2006.

\end{thebibliography}


\begin{thebibliography}{66}
\providecommand{\natexlab}[1]{#1}
\providecommand{\url}[1]{\texttt{#1}}
\expandafter\ifx\csname urlstyle\endcsname\relax
  \providecommand{\doi}[1]{doi: #1}\else
  \providecommand{\doi}{doi: \begingroup \urlstyle{rm}\Url}\fi

\bibitem[Ahmed et~al.(2018)Ahmed, Roux, Norouzi, and
  Schuurmans]{entropy_regularization}
Zafarali Ahmed, Nicolas~Le Roux, Mohammad Norouzi, and Dale Schuurmans.
\newblock Understanding the impact of entropy on policy optimization.
\newblock \emph{CoRR}, abs/1811.11214, 2018.

\bibitem[Arjovsky et~al.(2019)Arjovsky, Bottou, Gulrajani, and
  Lopez{-}Paz]{invariant_risk}
Mart{\'{\i}}n Arjovsky, L{\'{e}}on Bottou, Ishaan Gulrajani, and David
  Lopez{-}Paz.
\newblock Invariant risk minimization.
\newblock \emph{CoRR}, abs/1907.02893, 2019.
\newblock URL \url{http://arxiv.org/abs/1907.02893}.

\bibitem[Arora et~al.(2018{\natexlab{a}})Arora, Cohen, and
  Hazan]{implicit_acceleration}
Sanjeev Arora, Nadav Cohen, and Elad Hazan.
\newblock On the optimization of deep networks: Implicit acceleration by
  overparameterization.
\newblock \emph{CoRR}, abs/1802.06509, 2018{\natexlab{a}}.
\newblock URL \url{http://arxiv.org/abs/1802.06509}.

\bibitem[Arora et~al.(2018{\natexlab{b}})Arora, Cohen, and
  Hazan]{overparametrization}
Sanjeev Arora, Nadav Cohen, and Elad Hazan.
\newblock On the optimization of deep networks: Implicit acceleration by
  overparameterization.
\newblock In \emph{Proceedings of the 35th International Conference on Machine
  Learning, {ICML} 2018, Stockholmsm{\"{a}}ssan, Stockholm, Sweden, July 10-15,
  2018}, pp.\  244--253, 2018{\natexlab{b}}.

\bibitem[Arora et~al.(2018{\natexlab{c}})Arora, Ge, Neyshabur, and
  Zhang]{compression}
Sanjeev Arora, Rong Ge, Behnam Neyshabur, and Yi~Zhang.
\newblock Stronger generalization bounds for deep nets via a compression
  approach.
\newblock In \emph{Proceedings of the 35th International Conference on Machine
  Learning, {ICML} 2018, Stockholmsm{\"{a}}ssan, Stockholm, Sweden, July 10-15,
  2018}, pp.\  254--263, 2018{\natexlab{c}}.

\bibitem[Arora et~al.(2019)Arora, Cohen, Hu, and Luo]{matrix_factorization}
Sanjeev Arora, Nadav Cohen, Wei Hu, and Yuping Luo.
\newblock Implicit regularization in deep matrix factorization.
\newblock \emph{CoRR}, abs/1905.13655, 2019.
\newblock URL \url{http://arxiv.org/abs/1905.13655}.

\bibitem[Asadi et~al.(2018)Asadi, Misra, and Littman]{lipschitz}
Kavosh Asadi, Dipendra Misra, and Michael~L. Littman.
\newblock Lipschitz continuity in model-based reinforcement learning.
\newblock In \emph{Proceedings of the 35th International Conference on Machine
  Learning, {ICML} 2018, Stockholmsm{\"{a}}ssan, Stockholm, Sweden, July 10-15,
  2018}, pp.\  264--273, 2018.

\bibitem[Bartlett \& Mendelson(2002)Bartlett and
  Mendelson]{bartlett2002rademacher}
Peter~L Bartlett and Shahar Mendelson.
\newblock Rademacher and gaussian complexities: Risk bounds and structural
  results.
\newblock \emph{Journal of Machine Learning Research}, 3\penalty0
  (Nov):\penalty0 463--482, 2002.

\bibitem[Bartlett et~al.(2017)Bartlett, Foster, and Telgarsky]{margin}
Peter~L Bartlett, Dylan~J Foster, and Matus~J Telgarsky.
\newblock Spectrally-normalized margin bounds for neural networks.
\newblock In I.~Guyon, U.~V. Luxburg, S.~Bengio, H.~Wallach, R.~Fergus,
  S.~Vishwanathan, and R.~Garnett (eds.), \emph{Advances in Neural Information
  Processing Systems 30}, pp.\  6240--6249. Curran Associates, Inc., 2017.

\bibitem[Bellemare et~al.(2012)Bellemare, Naddaf, Veness, and Bowling]{atari}
Marc~G. Bellemare, Yavar Naddaf, Joel Veness, and Michael Bowling.
\newblock The arcade learning environment: An evaluation platform for general
  agents.
\newblock \emph{CoRR}, abs/1207.4708, 2012.

\bibitem[Bradley et~al.(2011)Bradley, Kyrola, Bickson, and
  Guestrin]{coordinate_descent}
Joseph~K. Bradley, Aapo Kyrola, Danny Bickson, and Carlos Guestrin.
\newblock Parallel coordinate descent for l1-regularized loss minimization.
\newblock In \emph{Proceedings of the 28th International Conference on Machine
  Learning, {ICML} 2011, Bellevue, Washington, USA, June 28 - July 2, 2011},
  pp.\  321--328, 2011.
\newblock URL \url{https://icml.cc/2011/papers/231\_icmlpaper.pdf}.

\bibitem[Choromanski et~al.(2018)Choromanski, Rowland, Sindhwani, Turner, and
  Weller]{toeplitz}
Krzysztof Choromanski, Mark Rowland, Vikas Sindhwani, Richard~E. Turner, and
  Adrian Weller.
\newblock Structured evolution with compact architectures for scalable policy
  optimization.
\newblock In \emph{Proceedings of the 35th International Conference on Machine
  Learning, {ICML} 2018, Stockholmsm{\"{a}}ssan, Stockholm, Sweden, July 10-15,
  2018}, pp.\  969--977, 2018.
\newblock URL \url{http://proceedings.mlr.press/v80/choromanski18a.html}.

\bibitem[Cobbe et~al.(2018)Cobbe, Klimov, Hesse, Kim, and Schulman]{coinrun}
Karl Cobbe, Oleg Klimov, Chris Hesse, Taehoon Kim, and John Schulman.
\newblock Quantifying generalization in reinforcement learning.
\newblock \emph{CoRR}, abs/1812.02341, 2018.

\bibitem[Dean et~al.(2019)Dean, Mania, Matni, Recht, and Tu]{lqr_samplecomplex}
Sarah Dean, Horia Mania, Nikolai Matni, Benjamin Recht, and Stephen Tu.
\newblock On the sample complexity of the linear quadratic regulator.
\newblock \emph{Foundations of Computational Mathematics}, Aug 2019.

\bibitem[Dziugaite \& Roy(2017)Dziugaite and Roy]{dziugaite2017computing}
Gintare~Karolina Dziugaite and Daniel~M Roy.
\newblock Computing nonvacuous generalization bounds for deep (stochastic)
  neural networks with many more parameters than training data.
\newblock \emph{arXiv preprint arXiv:1703.11008}, 2017.

\bibitem[Espeholt et~al.(2018)Espeholt, Soyer, Munos, Simonyan, Mnih, Ward,
  Doron, Firoiu, Harley, Dunning, Legg, and Kavukcuoglu]{impala}
Lasse Espeholt, Hubert Soyer, R{\'{e}}mi Munos, Karen Simonyan, Volodymyr Mnih,
  Tom Ward, Yotam Doron, Vlad Firoiu, Tim Harley, Iain Dunning, Shane Legg, and
  Koray Kavukcuoglu.
\newblock {IMPALA:} scalable distributed deep-rl with importance weighted
  actor-learner architectures.
\newblock In \emph{Proceedings of the 35th International Conference on Machine
  Learning, {ICML} 2018, Stockholmsm{\"{a}}ssan, Stockholm, Sweden, July 10-15,
  2018}, pp.\  1406--1415, 2018.

\bibitem[Fazel et~al.(2018)Fazel, Ge, Kakade, and Mesbahi]{lqr}
Maryam Fazel, Rong Ge, Sham Kakade, and Mehran Mesbahi.
\newblock Global convergence of policy gradient methods for the linear
  quadratic regulator.
\newblock In \emph{Proceedings of the 35th International Conference on Machine
  Learning, {ICML} 2018, Stockholmsm{\"{a}}ssan, Stockholm, Sweden, July 10-15,
  2018}, pp.\  1466--1475, 2018.

\bibitem[Finn et~al.(2017)Finn, Abbeel, and Levine]{maml}
Chelsea Finn, Pieter Abbeel, and Sergey Levine.
\newblock Model-agnostic meta-learning for fast adaptation of deep networks.
\newblock In \emph{Proceedings of the 34th International Conference on Machine
  Learning, {ICML} 2017, Sydney, NSW, Australia, 6-11 August 2017}, pp.\
  1126--1135, 2017.
\newblock URL \url{http://proceedings.mlr.press/v70/finn17a.html}.

\bibitem[Gaier \& Ha(2019)Gaier and Ha]{weight_agnostic}
Adam Gaier and David Ha.
\newblock Weight agnostic neural networks.
\newblock \emph{CoRR}, abs/1906.04358, 2019.
\newblock URL \url{http://arxiv.org/abs/1906.04358}.

\bibitem[Gamrian \& Goldberg(2019)Gamrian and Goldberg]{transfer_cyclegan}
Shani Gamrian and Yoav Goldberg.
\newblock Transfer learning for related reinforcement learning tasks via
  image-to-image translation.
\newblock In \emph{Proceedings of the 36th International Conference on Machine
  Learning, {ICML} 2019, 9-15 June 2019, Long Beach, California, {USA}}, pp.\
  2063--2072, 2019.
\newblock URL \url{http://proceedings.mlr.press/v97/gamrian19a.html}.

\bibitem[Garriga{-}Alonso et~al.(2019)Garriga{-}Alonso, Rasmussen, and
  Aitchison]{conv_GP}
Adri{\`{a}} Garriga{-}Alonso, Carl~Edward Rasmussen, and Laurence Aitchison.
\newblock Deep convolutional networks as shallow gaussian processes.
\newblock In \emph{7th International Conference on Learning Representations,
  {ICLR} 2019, New Orleans, LA, USA, May 6-9, 2019}, 2019.
\newblock URL \url{https://openreview.net/forum?id=Bklfsi0cKm}.

\bibitem[Golowich et~al.(2018)Golowich, Rakhlin, and Shamir]{sizeindependent}
Noah Golowich, Alexander Rakhlin, and Ohad Shamir.
\newblock Size-independent sample complexity of neural networks.
\newblock In \emph{Conference On Learning Theory, {COLT} 2018, Stockholm,
  Sweden, 6-9 July 2018.}, pp.\  297--299, 2018.
\newblock URL \url{http://proceedings.mlr.press/v75/golowich18a.html}.

\bibitem[Greydanus et~al.(2018)Greydanus, Koul, Dodge, and Fern]{saliency}
Samuel Greydanus, Anurag Koul, Jonathan Dodge, and Alan Fern.
\newblock Visualizing and understanding atari agents.
\newblock In \emph{Proceedings of the 35th International Conference on Machine
  Learning, {ICML} 2018, Stockholmsm{\"{a}}ssan, Stockholm, Sweden, July 10-15,
  2018}, pp.\  1787--1796, 2018.

\bibitem[Gunasekar et~al.(2017)Gunasekar, Woodworth, Bhojanapalli, Neyshabur,
  and Srebro]{gunasekar2017implicit}
Suriya Gunasekar, Blake~E Woodworth, Srinadh Bhojanapalli, Behnam Neyshabur,
  and Nati Srebro.
\newblock Implicit regularization in matrix factorization.
\newblock In \emph{Advances in Neural Information Processing Systems}, pp.\
  6151--6159, 2017.

\bibitem[Heinze-Deml \& Meinshausen(2019)Heinze-Deml and
  Meinshausen]{conditional_variance}
Christina Heinze-Deml and Nicolai Meinshausen.
\newblock Conditional variance penalties and domain shift robustness.
\newblock \emph{CoRR}, abs/1710.11469, 2019.
\newblock URL \url{https://arxiv.org/abs/1710.11469}.

\bibitem[Heinze-Deml et~al.(2019)Heinze-Deml, Peters, and
  Meinshausen]{invariant_causal}
Christina Heinze-Deml, Jonas Peters, and Nicolai Meinshausen.
\newblock Invariant causal prediction for nonlinear models.
\newblock \emph{CoRR}, abs/1706.08576, 2019.
\newblock URL \url{https://arxiv.org/abs/1706.08576}.

\bibitem[Igl et~al.(2019)Igl, Ciosek, Li, Tschiatschek, Zhang, Devlin, and
  Hofmann]{regularization_stack}
Maximilian Igl, Kamil Ciosek, Yingzhen Li, Sebastian Tschiatschek, Cheng Zhang,
  Sam Devlin, and Katja Hofmann.
\newblock Generalization in reinforcement learning with selective noise
  injection and information bottleneck.
\newblock In \emph{Advances in Neural Information Processing Systems 30: Annual
  Conference on Neural Information Processing Systems 2019,Long Beach, CA,
  {USA}}, 2019.

\bibitem[Jacot et~al.(2018)Jacot, Hongler, and Gabriel]{NTK}
Arthur Jacot, Cl{\'{e}}ment Hongler, and Franck Gabriel.
\newblock Neural tangent kernel: Convergence and generalization in neural
  networks.
\newblock In \emph{Advances in Neural Information Processing Systems 31: Annual
  Conference on Neural Information Processing Systems 2018, NeurIPS 2018, 3-8
  December 2018, Montr{\'{e}}al, Canada.}, pp.\  8580--8589, 2018.

\bibitem[Jiang et~al.(2015)Jiang, Kulesza, Singh, and Lewis]{dependencehorizon}
Nan Jiang, Alex Kulesza, Satinder Singh, and Richard Lewis.
\newblock The dependence of effective planning horizon on model accuracy.
\newblock In \emph{Proceedings of the 2015 International Conference on
  Autonomous Agents and Multiagent Systems}, AAMAS '15, pp.\  1181--1189,
  Richland, SC, 2015. International Foundation for Autonomous Agents and
  Multiagent Systems.
\newblock ISBN 978-1-4503-3413-6.

\bibitem[Jiang et~al.(2018)Jiang, Krishnan, Mobahi, and
  Bengio]{margin_distributions}
Yiding Jiang, Dilip Krishnan, Hossein Mobahi, and Samy Bengio.
\newblock Predicting the generalization gap in deep networks with margin
  distributions.
\newblock \emph{CoRR}, abs/1810.00113, 2018.
\newblock URL \url{http://arxiv.org/abs/1810.00113}.

\bibitem[Justesen et~al.(2018)Justesen, Torrado, Bontrager, Khalifa, Togelius,
  and Risi]{procedural}
Niels Justesen, Ruben~Rodriguez Torrado, Philip Bontrager, Ahmed Khalifa,
  Julian Togelius, and Sebastian Risi.
\newblock Procedural level generation improves generality of deep reinforcement
  learning.
\newblock \emph{CoRR}, abs/1806.10729, 2018.
\newblock URL \url{http://arxiv.org/abs/1806.10729}.

\bibitem[Kakade et~al.(2008)Kakade, Sridharan, and Tewari]{linear_analysis}
Sham~M. Kakade, Karthik Sridharan, and Ambuj Tewari.
\newblock On the complexity of linear prediction: Risk bounds, margin bounds,
  and regularization.
\newblock In \emph{Advances in Neural Information Processing Systems 21,
  Proceedings of the Twenty-Second Annual Conference on Neural Information
  Processing Systems, Vancouver, British Columbia, Canada, December 8-11,
  2008}, pp.\  793--800, 2008.

\bibitem[Kingma \& Ba(2014)Kingma and Ba]{adam}
Diederik~P. Kingma and Jimmy Ba.
\newblock Adam: {A} method for stochastic optimization.
\newblock \emph{CoRR}, abs/1412.6980, 2014.

\bibitem[Lee et~al.(2019)Lee, Xiao, Schoenholz, Bahri, Sohl{-}Dickstein, and
  Pennington]{wide_neural_linear}
Jaehoon Lee, Lechao Xiao, Samuel~S. Schoenholz, Yasaman Bahri, Jascha
  Sohl{-}Dickstein, and Jeffrey Pennington.
\newblock Wide neural networks of any depth evolve as linear models under
  gradient descent.
\newblock \emph{CoRR}, abs/1902.06720, 2019.
\newblock URL \url{http://arxiv.org/abs/1902.06720}.

\bibitem[Liang et~al.(2019)Liang, Poggio, Rakhlin, and Stokes]{fischerrao}
Tengyuan Liang, Tomaso~A. Poggio, Alexander Rakhlin, and James Stokes.
\newblock Fisher-rao metric, geometry, and complexity of neural networks.
\newblock In \emph{The 22nd International Conference on Artificial Intelligence
  and Statistics, {AISTATS} 2019, 16-18 April 2019, Naha, Okinawa, Japan}, pp.\
   888--896, 2019.
\newblock URL \url{http://proceedings.mlr.press/v89/liang19a.html}.

\bibitem[Machado et~al.(2018)Machado, Bellemare, Talvitie, Veness, Hausknecht,
  and Bowling]{brute}
Marlos~C. Machado, Marc~G. Bellemare, Erik Talvitie, Joel Veness, Matthew~J.
  Hausknecht, and Michael Bowling.
\newblock Revisiting the arcade learning environment: Evaluation protocols and
  open problems for general agents.
\newblock \emph{J. Artif. Intell. Res.}, 61:\penalty0 523--562, 2018.
\newblock \doi{10.1613/jair.5699}.

\bibitem[Mania et~al.(2018)Mania, Guy, and Recht]{simplerandom}
Horia Mania, Aurelia Guy, and Benjamin Recht.
\newblock Simple random search provides a competitive approach to reinforcement
  learning.
\newblock \emph{CoRR}, abs/1803.07055, 2018.

\bibitem[McAllester(1999)]{mcallester1999pac}
David~A McAllester.
\newblock Pac-bayesian model averaging.
\newblock In \emph{COLT}, volume~99, pp.\  164--170. Citeseer, 1999.

\bibitem[McAllester(2003)]{simplified_pac}
David~A. McAllester.
\newblock Simplified pac-bayesian margin bounds.
\newblock In \emph{Computational Learning Theory and Kernel Machines, 16th
  Annual Conference on Computational Learning Theory and 7th Kernel Workshop,
  COLT/Kernel 2003, Washington, DC, USA, August 24-27, 2003, Proceedings}, pp.\
   203--215, 2003.
\newblock \doi{10.1007/978-3-540-45167-9\_16}.
\newblock URL \url{https://doi.org/10.1007/978-3-540-45167-9\_16}.

\bibitem[Mnih et~al.(2013)Mnih, Kavukcuoglu, Silver, Graves, Antonoglou,
  Wierstra, and Riedmiller]{dqn}
Volodymyr Mnih, Koray Kavukcuoglu, David Silver, Alex Graves, Ioannis
  Antonoglou, Daan Wierstra, and Martin~A. Riedmiller.
\newblock Playing atari with deep reinforcement learning.
\newblock \emph{CoRR}, abs/1312.5602, 2013.

\bibitem[Nagarajan \& Kolter(2019)Nagarajan and
  Kolter]{nagarajan2019generalization}
Vaishnavh Nagarajan and J~Zico Kolter.
\newblock Generalization in deep networks: The role of distance from
  initialization.
\newblock \emph{arXiv preprint arXiv:1901.01672}, 2019.

\bibitem[Neyshabur(2017)]{implicit_regularization}
Behnam Neyshabur.
\newblock Implicit regularization in deep learning.
\newblock \emph{CoRR}, abs/1709.01953, 2017.
\newblock URL \url{http://arxiv.org/abs/1709.01953}.

\bibitem[Neyshabur et~al.(2015{\natexlab{a}})Neyshabur, Tomioka, and
  Srebro]{neyshabur2014search}
Behnam Neyshabur, Ryota Tomioka, and Nathan Srebro.
\newblock In search of the real inductive bias: On the role of implicit
  regularization in deep learning.
\newblock \emph{ICLR}, 2015{\natexlab{a}}.

\bibitem[Neyshabur et~al.(2015{\natexlab{b}})Neyshabur, Tomioka, and
  Srebro]{neyshabur2015norm}
Behnam Neyshabur, Ryota Tomioka, and Nathan Srebro.
\newblock Norm-based capacity control in neural networks.
\newblock In \emph{Conference on Learning Theory}, pp.\  1376--1401,
  2015{\natexlab{b}}.

\bibitem[Neyshabur et~al.(2017)Neyshabur, Bhojanapalli, McAllester, and
  Srebro]{exploring_gen}
Behnam Neyshabur, Srinadh Bhojanapalli, David McAllester, and Nati Srebro.
\newblock Exploring generalization in deep learning.
\newblock In \emph{Advances in Neural Information Processing Systems 30: Annual
  Conference on Neural Information Processing Systems 2017, 4-9 December 2017,
  Long Beach, CA, {USA}}, pp.\  5949--5958, 2017.

\bibitem[Neyshabur et~al.(2018{\natexlab{a}})Neyshabur, Bhojanapalli, and
  Srebro]{pac_spectral}
Behnam Neyshabur, Srinadh Bhojanapalli, and Nathan Srebro.
\newblock A pac-bayesian approach to spectrally-normalized margin bounds for
  neural networks.
\newblock In \emph{6th International Conference on Learning Representations,
  {ICLR} 2018, Vancouver, BC, Canada, April 30 - May 3, 2018, Conference Track
  Proceedings}, 2018{\natexlab{a}}.
\newblock URL \url{https://openreview.net/forum?id=Skz\_WfbCZ}.

\bibitem[Neyshabur et~al.(2018{\natexlab{b}})Neyshabur, Li, Bhojanapalli,
  LeCun, and Srebro]{gen_overparametrization}
Behnam Neyshabur, Zhiyuan Li, Srinadh Bhojanapalli, Yann LeCun, and Nathan
  Srebro.
\newblock Towards understanding the role of over-parametrization in
  generalization of neural networks.
\newblock \emph{CoRR}, abs/1805.12076, 2018{\natexlab{b}}.
\newblock URL \url{http://arxiv.org/abs/1805.12076}.

\bibitem[Nichol et~al.(2018)Nichol, Pfau, Hesse, Klimov, and
  Schulman]{learnfast}
Alex Nichol, Vicki Pfau, Christopher Hesse, Oleg Klimov, and John Schulman.
\newblock Gotta learn fast: {A} new benchmark for generalization in {RL}.
\newblock \emph{CoRR}, abs/1804.03720, 2018.

\bibitem[OpenAI(2018)]{dota2}
OpenAI.
\newblock Openai five.
\newblock \emph{CoRR}, 2018.
\newblock URL \url{https://openai.com/blog/openai-five/}.

\bibitem[Packer et~al.(2018)Packer, Gao, Kos, Kr{\"{a}}henb{\"{u}}hl, Koltun,
  and Song]{assessing}
Charles Packer, Katelyn Gao, Jernej Kos, Philipp Kr{\"{a}}henb{\"{u}}hl,
  Vladlen Koltun, and Dawn Song.
\newblock Assessing generalization in deep reinforcement learning.
\newblock \emph{CoRR}, abs/1810.12282, 2018.
\newblock URL \url{http://arxiv.org/abs/1810.12282}.

\bibitem[Peng et~al.(2018)Peng, Andrychowicz, Zaremba, and
  Abbeel]{pengsim2real}
Xue~Bin Peng, Marcin Andrychowicz, Wojciech Zaremba, and Pieter Abbeel.
\newblock Sim-to-real transfer of robotic control with dynamics randomization.
\newblock In \emph{2018 {IEEE} International Conference on Robotics and
  Automation, {ICRA} 2018, Brisbane, Australia, May 21-25, 2018}, pp.\  1--8,
  2018.
\newblock \doi{10.1109/ICRA.2018.8460528}.
\newblock URL \url{https://doi.org/10.1109/ICRA.2018.8460528}.

\bibitem[Pinto et~al.(2017)Pinto, Davidson, Sukthankar, and
  Gupta]{robust_adversarial_rl}
Lerrel Pinto, James Davidson, Rahul Sukthankar, and Abhinav Gupta.
\newblock Robust adversarial reinforcement learning.
\newblock In \emph{Proceedings of the 34th International Conference on Machine
  Learning, {ICML} 2017, Sydney, NSW, Australia, 6-11 August 2017}, pp.\
  2817--2826, 2017.

\bibitem[Rajeswaran et~al.(2017)Rajeswaran, Lowrey, Todorov, and
  Kakade]{simplicity}
Aravind Rajeswaran, Kendall Lowrey, Emanuel Todorov, and Sham~M. Kakade.
\newblock Towards generalization and simplicity in continuous control.
\newblock In \emph{Advances in Neural Information Processing Systems 30: Annual
  Conference on Neural Information Processing Systems 2017, 4-9 December 2017,
  Long Beach, CA, {USA}}, pp.\  6553--6564, 2017.

\bibitem[Santoro et~al.(2018)Santoro, Faulkner, Raposo, Rae, Chrzanowski,
  Weber, Wierstra, Vinyals, Pascanu, and Lillicrap]{rmc}
Adam Santoro, Ryan Faulkner, David Raposo, Jack~W. Rae, Mike Chrzanowski,
  Theophane Weber, Daan Wierstra, Oriol Vinyals, Razvan Pascanu, and Timothy~P.
  Lillicrap.
\newblock Relational recurrent neural networks.
\newblock In \emph{Advances in Neural Information Processing Systems 31: Annual
  Conference on Neural Information Processing Systems 2018, NeurIPS 2018, 3-8
  December 2018, Montr{\'{e}}al, Canada.}, pp.\  7310--7321, 2018.

\bibitem[Schulman et~al.(2017)Schulman, Wolski, Dhariwal, Radford, and
  Klimov]{ppo}
John Schulman, Filip Wolski, Prafulla Dhariwal, Alec Radford, and Oleg Klimov.
\newblock Proximal policy optimization algorithms.
\newblock \emph{CoRR}, abs/1707.06347, 2017.

\bibitem[Smith et~al.(2018)Smith, Kindermans, Ying, and Le]{batchsize}
Samuel~L. Smith, Pieter{-}Jan Kindermans, Chris Ying, and Quoc~V. Le.
\newblock Don't decay the learning rate, increase the batch size.
\newblock In \emph{6th International Conference on Learning Representations,
  {ICLR} 2018, Vancouver, BC, Canada, April 30 - May 3, 2018, Conference Track
  Proceedings}, 2018.
\newblock URL \url{https://openreview.net/forum?id=B1Yy1BxCZ}.

\bibitem[Such et~al.(2018)Such, Madhavan, Liu, Wang, Castro, Li, Schubert,
  Bellemare, Clune, and Lehman]{atari-zoo}
Felipe~Petroski Such, Vashisht Madhavan, Rosanne Liu, Rui Wang, Pablo~Samuel
  Castro, Yulun Li, Ludwig Schubert, Marc~G. Bellemare, Jeff Clune, and Joel
  Lehman.
\newblock An atari model zoo for analyzing, visualizing, and comparing deep
  reinforcement learning agents.
\newblock \emph{CoRR}, abs/1812.07069, 2018.

\bibitem[Tu \& Recht(2019)Tu and Recht]{lqr_model}
Stephen Tu and Benjamin Recht.
\newblock The gap between model-based and model-free methods on the linear
  quadratic regulator: An asymptotic viewpoint.
\newblock In \emph{Conference on Learning Theory, {COLT} 2019, 25-28 June 2019,
  Phoenix, AZ, {USA}}, pp.\  3036--3083, 2019.
\newblock URL \url{http://proceedings.mlr.press/v99/tu19a.html}.

\bibitem[Vapnik \& Chervonenkis(1971)Vapnik and
  Chervonenkis]{vapnik2015uniform}
Vladimir~N Vapnik and A~Ya Chervonenkis.
\newblock On the uniform convergence of relative frequencies of events to their
  probabilities.
\newblock In \emph{Theory of probability and its applications}, pp.\  11--30.
  Springer, 1971.

\bibitem[Vershynin(2009)]{gfa}
Roman Vershynin.
\newblock Lectures in geometric functional analysis.
\newblock \emph{CoRR}, 2009.

\bibitem[Vinyals et~al.(2017)Vinyals, Ewalds, Bartunov, Georgiev, Vezhnevets,
  Yeo, Makhzani, K{\"{u}}ttler, Agapiou, Schrittwieser, Quan, Gaffney,
  Petersen, Simonyan, Schaul, van Hasselt, Silver, Lillicrap, Calderone, Keet,
  Brunasso, Lawrence, Ekermo, Repp, and Tsing]{sc2}
Oriol Vinyals, Timo Ewalds, Sergey Bartunov, Petko Georgiev, Alexander~Sasha
  Vezhnevets, Michelle Yeo, Alireza Makhzani, Heinrich K{\"{u}}ttler, John
  Agapiou, Julian Schrittwieser, John Quan, Stephen Gaffney, Stig Petersen,
  Karen Simonyan, Tom Schaul, Hado van Hasselt, David Silver, Timothy~P.
  Lillicrap, Kevin Calderone, Paul Keet, Anthony Brunasso, David Lawrence,
  Anders Ekermo, Jacob Repp, and Rodney Tsing.
\newblock Starcraft {II:} {A} new challenge for reinforcement learning.
\newblock \emph{CoRR}, abs/1708.04782, 2017.
\newblock URL \url{http://arxiv.org/abs/1708.04782}.

\bibitem[Wang et~al.(2019)Wang, Zheng, Xiong, and Socher]{salesforce}
Huan Wang, Stephan Zheng, Caiming Xiong, and Richard Socher.
\newblock On the generalization gap in reparameterizable reinforcement
  learning.
\newblock In \emph{Proceedings of the 36th International Conference on Machine
  Learning, {ICML} 2019, 9-15 June 2019, Long Beach, California, {USA}}, pp.\
  6648--6658, 2019.
\newblock URL \url{http://proceedings.mlr.press/v97/wang19o.html}.

\bibitem[Yu et~al.(2017)Yu, Tan, Liu, and Turk]{system_identification}
Wenhao Yu, Jie Tan, C.~Karen Liu, and Greg Turk.
\newblock Preparing for the unknown: Learning a universal policy with online
  system identification.
\newblock In \emph{Robotics: Science and Systems XIII, Massachusetts Institute
  of Technology, Cambridge, Massachusetts, USA, July 12-16, 2017}, 2017.
\newblock \doi{10.15607/RSS.2017.XIII.048}.
\newblock URL \url{http://www.roboticsproceedings.org/rss13/p48.html}.

\bibitem[Zhang et~al.(2018{\natexlab{a}})Zhang, Ballas, and Pineau]{dissection}
Amy Zhang, Nicolas Ballas, and Joelle Pineau.
\newblock A dissection of overfitting and generalization in continuous
  reinforcement learning.
\newblock \emph{CoRR}, abs/1806.07937, 2018{\natexlab{a}}.

\bibitem[Zhang et~al.(2018{\natexlab{b}})Zhang, Wu, and
  Pineau]{videobackground}
Amy Zhang, Yuxin Wu, and Joelle Pineau.
\newblock Natural environment benchmarks for reinforcement learning.
\newblock \emph{CoRR}, abs/1811.06032, 2018{\natexlab{b}}.

\bibitem[Zhang et~al.(2018{\natexlab{c}})Zhang, Vinyals, Munos, and
  Bengio]{overfittingRL}
Chiyuan Zhang, Oriol Vinyals, R{\'{e}}mi Munos, and Samy Bengio.
\newblock A study on overfitting in deep reinforcement learning.
\newblock \emph{CoRR}, abs/1804.06893, 2018{\natexlab{c}}.
\newblock URL \url{http://arxiv.org/abs/1804.06893}.

\end{thebibliography}

\onecolumn
\appendix
\renewcommand{\thesection}{A.\arabic{section}}
\renewcommand{\thefigure}{A\arabic{figure}}
\setcounter{section}{0}
\setcounter{figure}{0}
\date{}
\maketitle

\section{Full Plots for LQR and fg-Gym}
\subsection{LQR}
\begin{figure}[H]
\begin{minipage}{1.0\textwidth}

  \centering
	\subfigure[]{\includegraphics[keepaspectratio, width=0.45\textwidth]{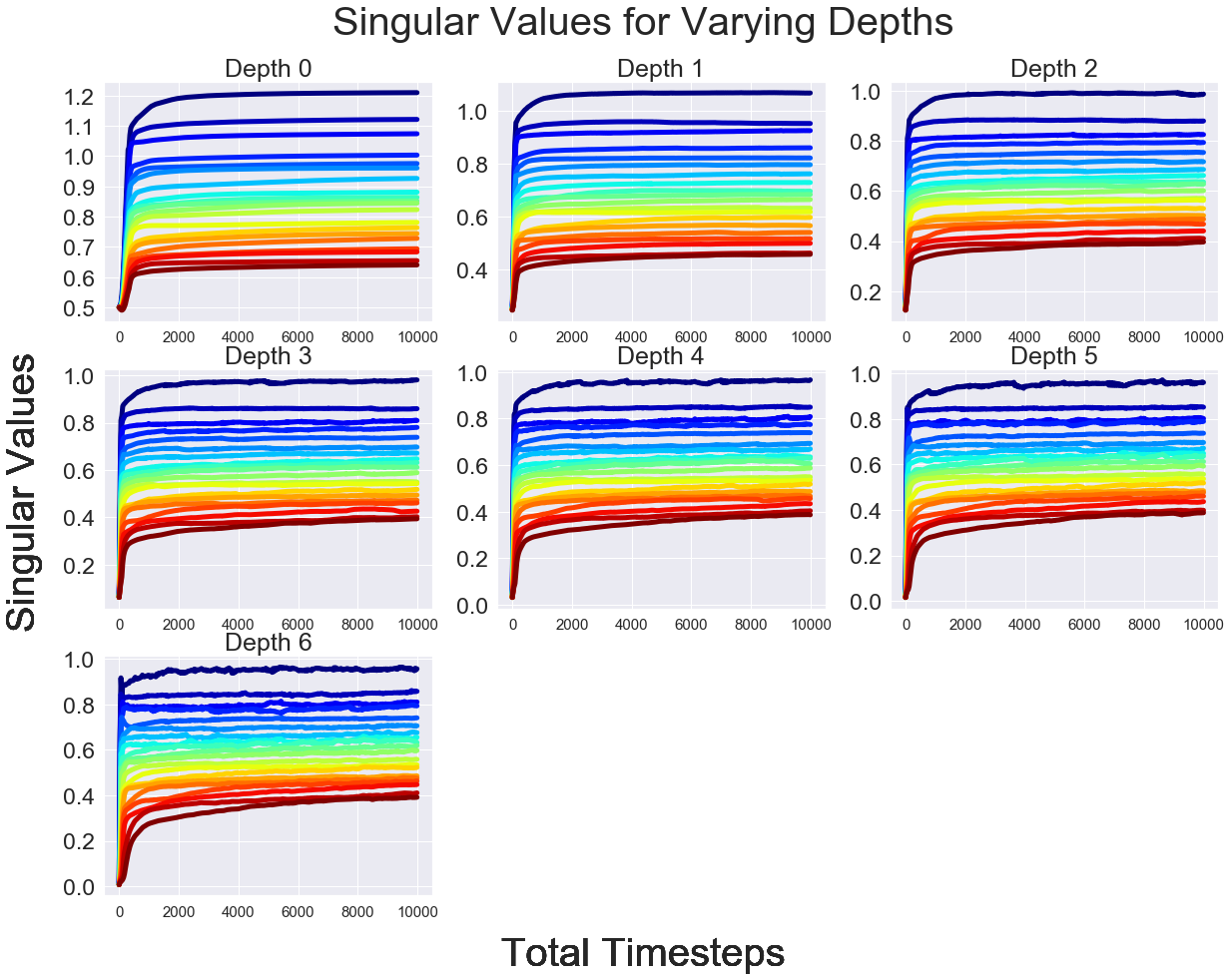}}
	\subfigure[]{\includegraphics[keepaspectratio, width=0.45\textwidth]{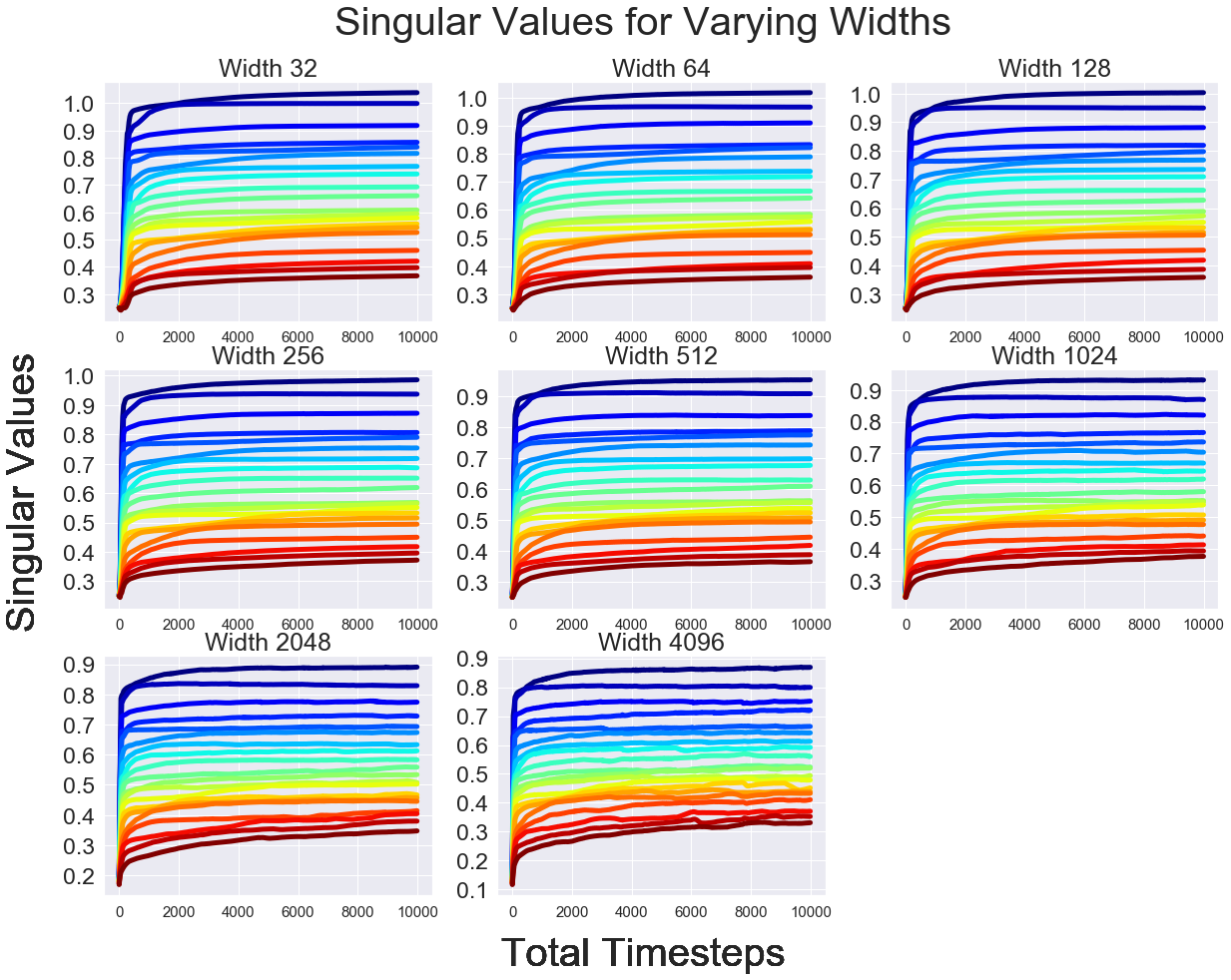}}
	\subfigure[]{\includegraphics[keepaspectratio, width=0.45\textwidth]{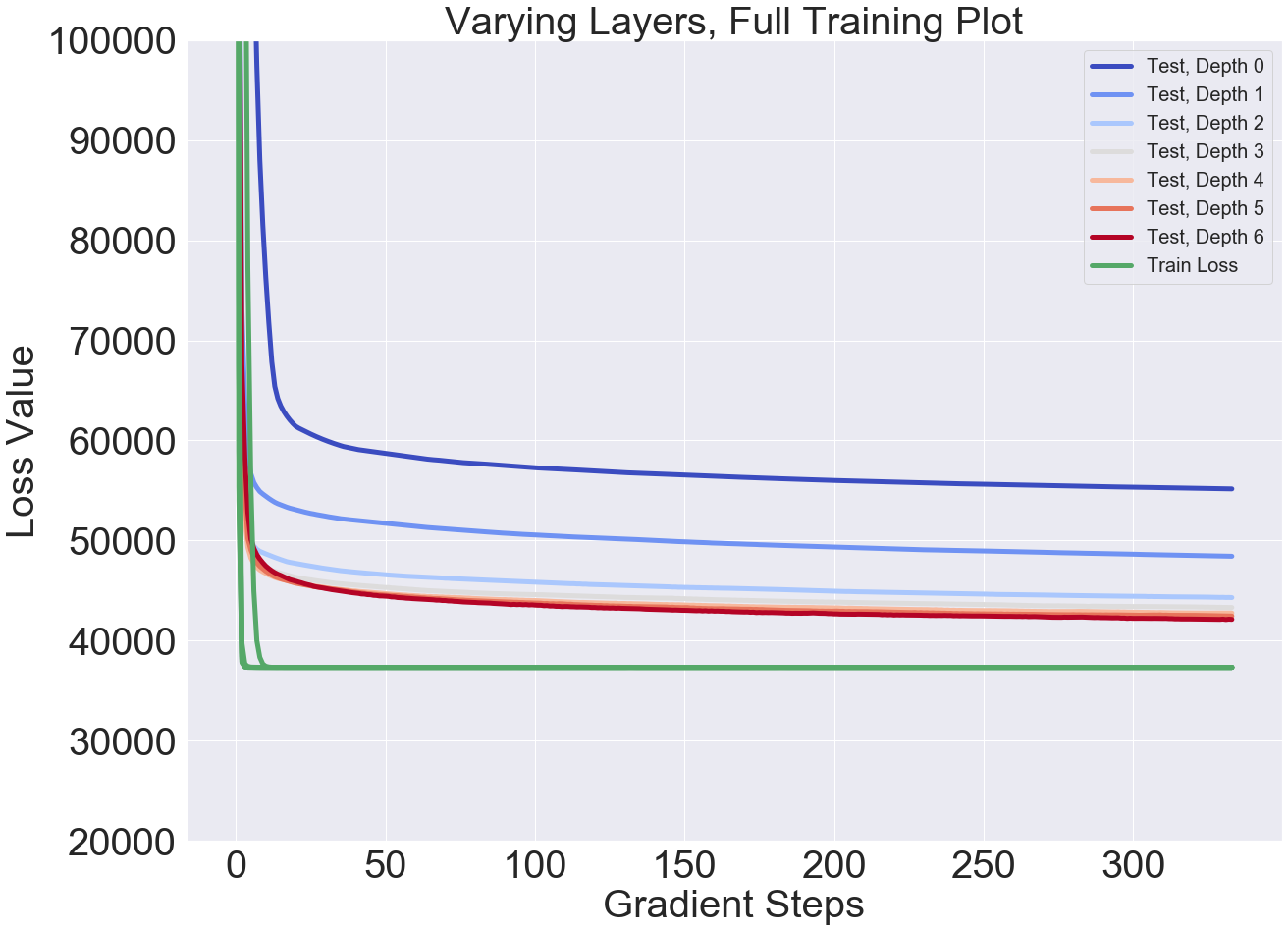}}
	\subfigure[]{\includegraphics[keepaspectratio, width=0.45\textwidth]{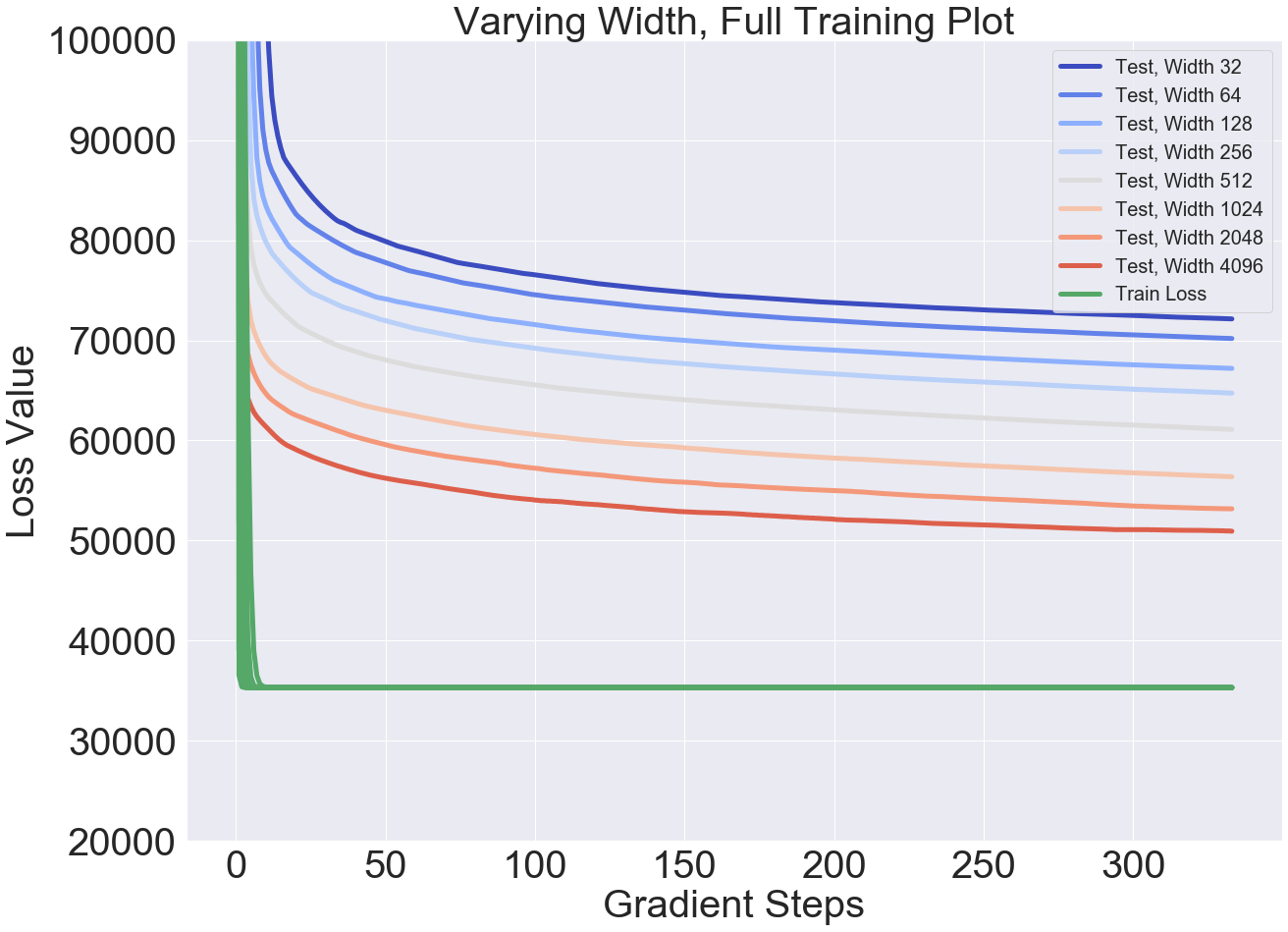}}
	\subfigure[]{\includegraphics[keepaspectratio, width=0.4\textwidth]{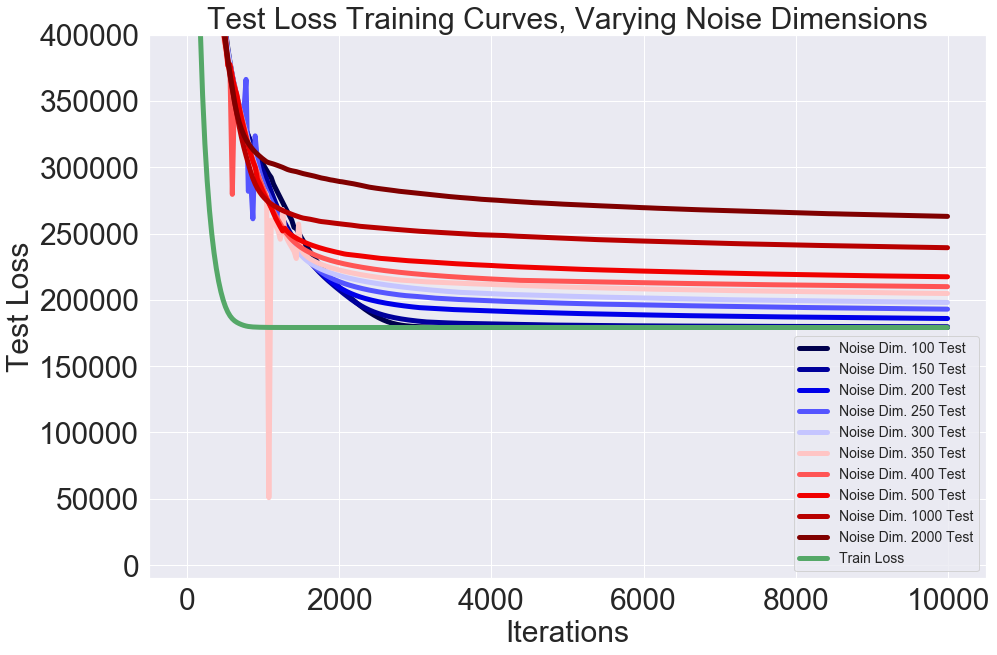}}
    \label{fig:lqr_appendix}
\end{minipage}
  \caption{(a,b): Singular Values for varying depths and widths. (c,d): Train and Test Loss for varying widths and depths. (e): Train and Test Loss for varying Noise Dimensions.} 
\end{figure}

We further verify that \textit{explicit regularization} (norm based penalties) also reduces generalization gaps. However, explicit regularization may be explained due to the bias of the synthetic tasks, since the first layer's matrix may be regularized to only "view" the output of $f$, especially as regularizing the first layer's weights substantially improves generalization. 
\begin{figure}
\begin{minipage}{1.0\textwidth}

  \centering
    \label{fig:fgh_mujoco_explicit_reg}\includegraphics[width=0.44\textwidth]{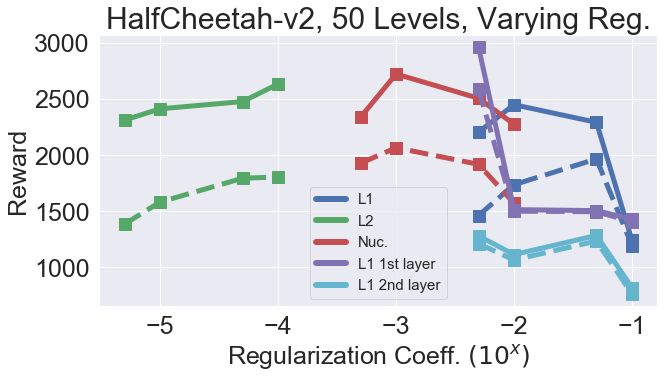}
\end{minipage}
  \caption{Explicit Regularization on layer norms.}
\end{figure}

\null\newpage

\section{Large ImageNet Models for CoinRun} \label{large_coinrun}
We provide the training/testing curves for the ImageNet/large convolutional models used. Note the following:
\begin{compactenum}
    \item RMC32x32 projects the native image from CoinRun from $64 \times 64$ to $32 \times 32$, and uses all pixels as components for attention, after adding the coordinate embedding found in \citep{rmc}. Optimal parameters were (mem\_slots = 4, head\_size = 32, num\_heads = 4, num\_blocks = 2,  gate\_style = 'memory').
    \item Auxiliary Loss in ShakeShake was not used during training, only the pure network. 
    \item VGG-A is a similar but slightly smaller version of VGG-16.
\end{compactenum}

\begin{figure}[H]

  \centering
    \includegraphics[scale = 0.21]{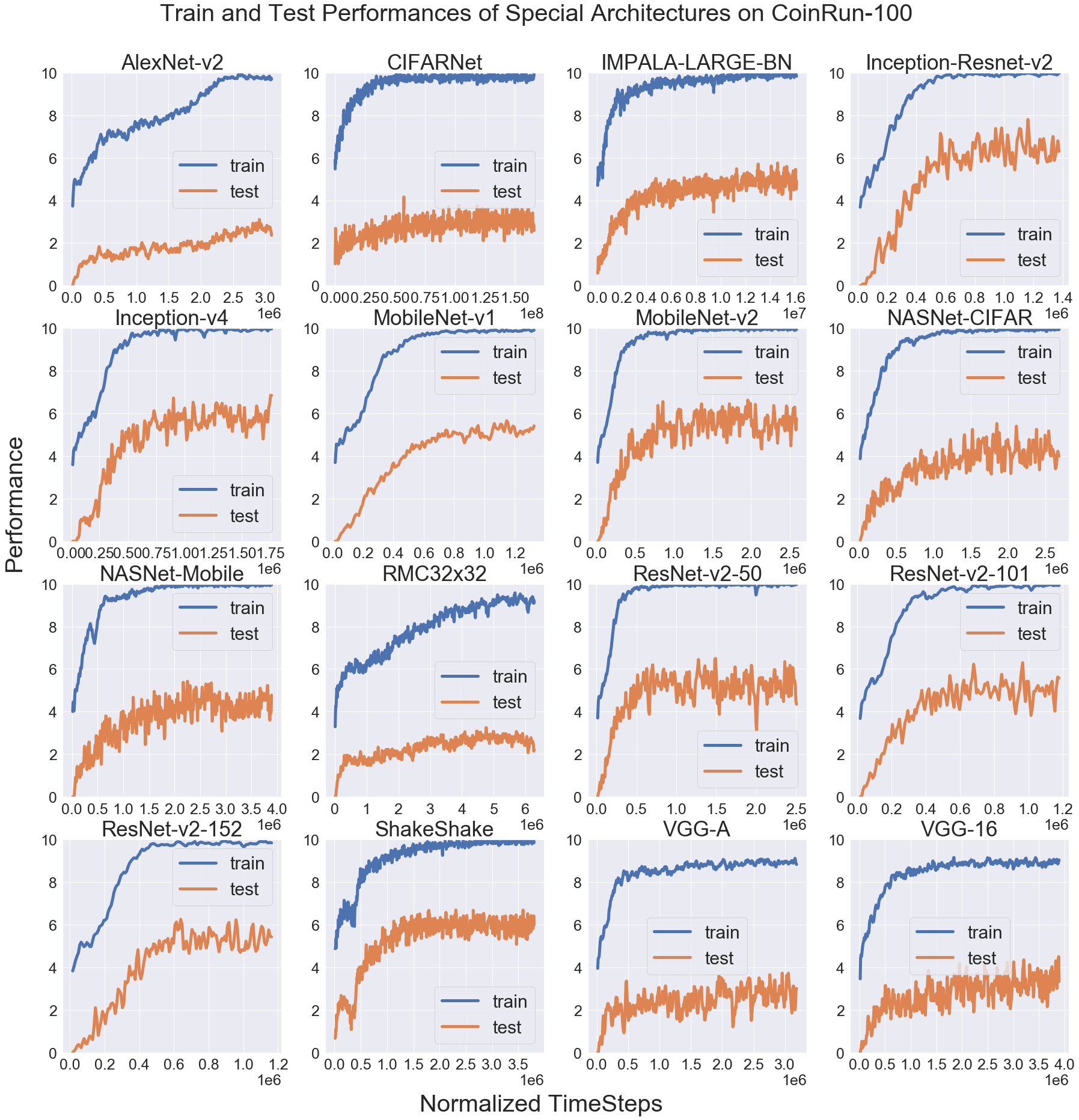}
    \label{fig:largearch}
  \caption{Large Architecture Training/Testing Curves (Smoothed).} 
\end{figure}

\section{Hyperparameters and Exact Setups}
\subsection{Exact infinite LQR} \label{lqr_hyperparams}
For infinite horizon case, see \citep{lqr} for the the full solution and notations. Using the same notation $(A,B,Q,R)$, denote $C(K) = \sum_{x_{0} \sim \mathcal{D}} x_{0}^{T}P_{K}x_{0}$ as the cost and $u_{t} = -Kx_{t}$ as the policy, where $P_{K}$ satisfies the infinite case for the Lyapunov equation: \begin{equation} P_{K} = Q + K^{T}RK + (A-BK)^{T} P_{K}(A-BK) \end{equation}
We may calculate the precise LQR cost by vectorizing (i.e. flattening) both sides' matrices and using the Kroncker product $\otimes$, which leads to a linear regression problem on $P_{K}$, which has a precise solution, implementable in TensorFlow: 
\begin{equation} \veclin(P_{K}) = \veclin(Q) + \veclin(K^{T}RK) + \left[(A-BK)^{T} \otimes (A-BK)^{T}) \right] \veclin(P_{K}) \end{equation}

\begin{equation} \left[I_{n^{2}} -  (A-BK)^{T} \otimes (A-BK)^{T} \right] \veclin(P_{K}) =  \veclin(Q) + \veclin(K^{T}RK) \end{equation}
\begin{table}[h]
  \begin{center}
    \label{lqrexact}
    \begin{tabular}{l|c|r} 
      Parameter & Generation \\
      \hline
       $A$ & Orthogonal initialization, scaled $0.99$ \\
       $B$ & $I_{n}$\\
       $Q$ & $I_{n}$\\
       $R$ & $I_{n}$\\
       $n$ & 10 \\
       $K_{i} \> \> \> \forall i$& Orthogonal Initialization, scaled 0.5 \\
    \end{tabular}
        \caption{Hyperparameters for LQR}
  \end{center}
\end{table}
\subsection{Projection Method}
The basis for producing $f,g_{\theta}$ outputs is due to using batch matrix multiplication operations, or "BMV", where the same network architecture uses different network weights for each batch dimension, and thus each entry in a batchsize of $B$ will be processed by the same architecture, but with different network weights. This is to simulate the effect of $g_{\theta_{i}}$. The numeric ID $i$ of the environment is used as an index to collect a specific set of network weights $\theta_{i}$ from a global memory of network weights (e.g. using \texttt{tensorflow.gather}). We did not use nonlinear activations for the BMV architectures, as they did not change the outcome of the results.
\begin{center}
    \begin{tabular}{ | l | l |}
    \hline
    Architecture & Setup \\ \hline \hline
    BMV-Deconv  & (filtersize = 2, stride = 1, outchannel = 8, padding = "VALID")  \\ 
                          & (filtersize = 4, stride = 2, outchannel = 4, padding = "VALID")    \\ 
                          & (filtersize = 8, stride = 2, outchannel = 4, padding = "VALID")    \\ 
                          & (filtersize = 8, stride = 3, outchannel = 3, padding = "VALID") \\ \hline
    BMV-Dense &  $f:$ Dense 30, $g: $ Dense 100 \\ \hline
    \end{tabular}
\end{center}

\subsection{ImageNet Models}
For the networks used in the supervised learning tasks, we direct the reader to the following repository: 
\url{https://github.com/tensorflow/models/blob/master/research/slim/nets/nets_factory.py}. We also used the RMC: \url{deepmind/sonnet/blob/master/sonnet/python/modules/relational_memory.py} 

\subsection{PPO Parameters} \label{hyper}
For the projected gym tasks, we used for PPO2 Hyperparameters:  
\begin{center}
    \begin{tabular}{ | l | l |}
    \hline
    PPO2 Hyperparameters & Values \\ \hline \hline
    nsteps  & 2048  \\ \hline 
    nenvs & 16 \\ \hline
    nminibatches & 64 \\ \hline
    $\lambda $  &  0.95    \\ \hline 
    $\gamma$ & 0.99 \\ \hline
    noptepochs  & 10   \\ \hline
    entropy & 0.0 \\ \hline  
    learning rate & $3 \cdot 10^{-4}$ \\ \hline
    vf coeffiicent & 0.5 \\ \hline
    max-grad-norm & 0.5 \\ \hline
    total time steps & Varying \\ \hline
    \end{tabular}
\end{center}

See \citep{coinrun} for the default parameters used for CoinRun. We only varied nminibatches in order to fit memory onto GPU. We also did not use RNN additions, in order to measure performance only from the feedforward network - the framestacking/temporal aspect is replaced by the option to present the agent velocity in the image.

\section{Theoretical (LQR)} \label{lqr_proof}
In this section, we use notation consistent with \citep{lqr} for our base proofs. However, in order to avoid confusion with a high dimensional policy $K$ we described in \ref{sec:exp_lqr}, we denote our low dimensional base policy as $P$ and state as $s_{t}$ rather than $x_{t}$.

\subsection{Notation and Setting}

Let $\norm{\cdot}$ be the spectral norm of a matrix (i.e. largest singular value). Suppose $C(P)$ was the infinite horizon cost for an $(A,B,Q,R)$-LQR where action $a_{t} = -P \cdot s_{t}$, $s_{t}$ is the state at time $t$, state transition is $s_{t+1} = A \cdot s_{t} + B \cdot a_{t}$, and timestep cost is $s_{t}^{T}Qs_{t} + a_{t}^{T}Ra_{t}$.

$C(P)$ for an infinite horizon LQR, while known to be non-convex, still possess the property that when $\nabla C(P^{*}) = 0$, $P^{*}$ is a global minimizer, or the problem statement is rank deficient. To ensure that our cost $C(P)$ always remains finite, we restrict our analysis when $P \in \mathcal{P}$, where $\mathcal{P} = \{P : \norm{P} \le \alpha \text{  and  } \norm{A-BP} \le 1 \}$ for some constant $\alpha$, by choosing $A,B$ and the initialization of $P$ appropriately, using the hyperparameters found in \ref{lqr_hyperparams}. We further define the observation modified cost as $C(K; W_{\theta}) = C\left( K\cvectwo{W_c}{W_{\theta}}^\T \right)$.

\subsubsection{Smoothness Bounds}
As described in Lemma 16 of \citep{lqr}, we define 
\begin{equation} 
T_{P}(X) = \sum_{t=0}^{\infty} (A-BP)^{t}X [(A-BP)^{T} ]^{t}
\end{equation}
and $\norm{T_{P}} = \sup_{X} \frac{ T_{P}(X) }{ \norm{X}}$ over all non-zero symmetric matrices $X$.

Lemma 27 of \citep{lqr} provides a bound on the difference $C(P') - C(P)$ for two different policies $P, P'$ when LQR parameters $A,B,Q,R$ are fixed. During the derivation, it states that when $ \norm{P - P'} \le \min \left( \frac{\sigma_{min}(Q)\mu} {4 C(P) \norm{B} ( \norm{A-BP} +1 ) }   , \norm{P} \right)$, then:

\begin{equation} 
\begin{aligned}
C(P') - C(P) \le 2 \norm{T_{P}} (2\norm{P} \norm{R} \norm{P'-P} + \norm{R} \norm{P'- P}^{2}) +  \\ 2 \norm{T_{P}}^{2} 2 \norm{B} (\norm{A-BP}+ 1 ) \norm{P - P'} \norm{P}^{2} \norm{R}
\end{aligned}
\end{equation}

Lemma 17 also states that: 
\begin{equation} \norm{T_{P}} \le \frac{C(P)}{\mu \sigma_{min}(Q)}
\end{equation}
 where 
 \begin{equation}
 \mu =  \sigma_{min} (\E_{x_{0} \sim D} [x_{0}x_{0}^{T}])
\end{equation}

Assuming that in our problem setup, $x_{0}, Q, R, A, B$ were fixed, this means many of the parameters in the bounds are constant, and thus we conclude:

\begin{equation} 
C(P') - C(P) \le  \bigO \left( C(P)^{2} \left[\norm{P}^{2} \norm{P - P'}(\norm{A-BP} + \norm{B}+ 1) + \norm{P} \norm{P - P'}^{2}\right]  \right)
\end{equation}

Since we assumed $\norm{A-BP} \le 1$ or else $T_{P}(X)$ is infinite, we thus collect the terms:

\begin{equation} \label{sub_main_bound}
C(P') - C(P) \le  \bigO \left( C(P)^{2} \left[\norm{P}^{2} \norm{P - P'} + \norm{P} \norm{P - P'}^{2}\right]  \right)
\end{equation}

Since $\alpha$ is a bound on $\norm{P}$ for $P \in \mathcal{P}$, note that
\begin{equation}
\norm{P}^{2} \norm{P - P'} + \norm{P} \norm{P- P'}^{2} = \norm{P - P'} ( \norm{P}^{2} + \norm{P} + \norm{P- P'}) 
\end{equation}
\begin{equation}
\le \norm{P - P'} ( \norm{P}^{2} + \norm{P} ( \norm{P} + \norm{P'}) \le (3 \alpha^{2} ) \norm{P - P'} 
\end{equation}

From (\ref{sub_main_bound}), this leads to the bound:
\begin{equation} \label{main_bound}
C(P') - C(P) \le \bigO \left( C(P)^{2} \norm{P - P'}\right)
\end{equation}

Note that this directly implies a similar bound in the high dimensional observation case - in particular, if $P = K \cvectwo{W_c}{W_{\theta}}^\T $ and $P' = K \cvectwo{W_c}{W_{\theta}}^\T $ then $\norm{P - P'} \le \norm{K - K'} \norm{\cvectwo{W_c}{W_{\theta}}^\T } = \norm{K - K'}$.

\subsection{Gradient Dynamics in 1-Step LQR} \label{lqr_gradient_dynamics}
We first start with a convex cost 1-step LQR toy example under this regime, which shows that linear components such as $\beta \cvectwo{0}{W_{\theta}}^\T$ cannot be removed from the policy by gradient descent dynamics to improve generalization. To shorten notation, let $W_{c} \in \R^{n \times n}$ and $W_{\theta} \in  \R^{p \times n}$, where $n \ll p$. This is equivalent to setting $d_{signal} = d_{state} = n$ and $d_{noise} = p$, and thus the policy $K \in \R^{n \times (n + p)}$.

In the 1-step LQR, we allow $s_{0} \sim \mathcal{N}(0, I)$, $a_{0} = K \cvectwo{W_c}{W_{\theta}} s_{0}$ and $s_{1} = s_{0} + a_{0}$ with cost $\frac{1}{2} \norm{s_{1}}^{2}$, then 

\begin{equation}
C(K; W_{\theta}) = \E_{s_{0}} \left[\frac{1}{2} \norm{x_{0} + K \cvectwo{W_c}{W_{\theta}} x_{0}}^{2} \right] =  \frac{1}{2} \norm{I + K \cvectwo{W_c}{W_{\theta}} }^{2}_{F}
\end{equation} 
and
\begin{equation} 
\nabla C(K;W_{\theta}) = \left(I + K \cvectwo{W_c}{W_{\theta}}\right)\cvectwo{W_c}{W_{\theta}}^\T \:.
\end{equation}
Define the population cost as $C(K) := \E_{W_{\theta}}[C(K; W_{\theta})]$. 
Let the notation $O(p, n)$ denote
the following set of orthogonal matrices:
\begin{align*}
    O(p, n) = \{ W \in \R^{p \times n} : W^\T W = I \} \:.
\end{align*}
We use the shorthand $O(n) = O(n, n)$.
\begin{myprop}
\label{prop:hessian}
Suppose that $W_{\theta} \sim \mathrm{Unif}(O(p, n))$ and $W_{c} \sim \mathrm{Unif}(O(n))$.
Then 
\begin{enumerate}[label=(\roman*)]
        \item The minimizer of $C(K)$ is unique and given by $K_\star = \rvectwo{-W_{c}^\T}{0}$.
        \item Thus, the minimizer cost is  $C(K_{\star}) = 0$.
\end{enumerate}
\end{myprop}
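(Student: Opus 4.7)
The plan is to split $K$ into its two natural blocks, take the expectation over $W_\theta$ to get a closed form for $C(K)$, and then directly minimize the resulting decoupled objective.

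First, I would write $K = \rvectwo{K_1}{K_2}$ with $K_1 \in \R^{n \times n}$ and $K_2 \in \R^{n \times p}$, so that $K \cvectwo{W_c}{W_\theta} = K_1 W_c + K_2 W_\theta$ and hence
\begin{equation*}
C(K;W_\theta) = \tfrac{1}{2}\norm{I + K_1 W_c + K_2 W_\theta}_F^2.
\end{equation*}
Expanding the square gives three terms. The cross term $\langle I + K_1 W_c,\; K_2 W_\theta\rangle_F$ vanishes in expectation because $\mathrm{Unif}(O(p,n))$ is invariant under $W_\theta \mapsto -W_\theta$, so $\E[W_\theta]=0$. The remaining quadratic term is $\E[\norm{K_2 W_\theta}_F^2] = \Tr(K_2^\T K_2\, \E[W_\theta W_\theta^\T])$, and by the left $O(p)$-invariance of the uniform measure on $O(p,n)$ the matrix $\E[W_\theta W_\theta^\T]$ is a scalar multiple of $I_p$; matching traces via $\E[\Tr(W_\theta^\T W_\theta)] = n$ pins the constant down to $n/p$. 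Therefore
\begin{equation*}
C(K) = \tfrac{1}{2}\norm{I + K_1 W_c}_F^2 + \tfrac{n}{2p}\norm{K_2}_F^2.
\end{equation*}

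This expression decouples $K_1$ and $K_2$ and is jointly strictly convex: the second summand is strictly convex in $K_2$ (since $n/p > 0$) and is minimized uniquely at $K_2 = 0$, while the first summand is convex in $K_1$ and, because $W_c \in O(n)$ is invertible, is minimized uniquely by the linear equation $K_1 W_c = -I$, i.e.\ $K_1 = -W_c^\T$. Combining these yields the unique minimizer $K_\star = \rvectwo{-W_c^\T}{0}$, and substitution gives $C(K_\star) = 0$, which establishes both (i) and (ii).

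There is no serious obstacle; the only point that needs a line of care is the computation of $\E[W_\theta W_\theta^\T] = (n/p) I_p$ on the Stiefel manifold, since this is the step where the assumption $W_\theta \sim \mathrm{Unif}(O(p,n))$ (rather than merely $W_\theta^\T W_\theta = I$) is genuinely used. Everything else is bookkeeping, and uniqueness follows immediately from the strict convexity of the population objective together with the invertibility of the square orthogonal matrix $W_c$.
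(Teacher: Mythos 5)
Your proof is correct and follows essentially the same route as the paper: both compute the population cost in closed form using $\E[W_\theta]=0$ and $\E[W_\theta W_\theta^\T]=\tfrac{n}{p}I_p$ from invariance of the Haar measure, and then minimize the resulting quadratic. Your explicit block decoupling and strict-convexity argument merely makes the paper's terse ``both claims now follow'' (from the gradient and the positive-definite Hessian $\bmattwo{I}{0}{0}{\frac{n}{p}I}$) more transparent.
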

\begin{proof}
By standard properties of the Haar measure on $O(p, n)$, we have that
$\E[W_\theta] = 0$ and $\E[W_\theta W_\theta^\T] = \frac{n}{p} I$.
Therefore,
\begin{align*}   
        C(K) &= \frac{n}{2} + \frac{1}{2}\E \left[\norm{ K \cvectwo{W_c}{W_{\theta}}}_F^2 \right] + \E \left[\Tr\left(K \cvectwo{W_c}{W_{\theta}}\right) \right]  \\
        &= \frac{n}{2} + \frac{1}{2} \Tr\left( K^\T K \bmattwo{I}{0}{0}{\frac{n}{p} I} \right)  + \Tr\left( K \cvectwo{W_c}{0} \right) \:.
\end{align*}
We can now differentiate $C(K)$:
\begin{align*}
        \nabla C(K) = K  \bmattwo{I}{0}{0}{\frac{n}{p} I} + \rvectwo{W_c^\T}{0} \:.
\end{align*}
Both claims now follow.
\end{proof}

\subsubsection{Finite Sample Generalization Gap}
As an instructive example, we consider the case when we only possess one sample $W_{\theta}$. Note that if $K = K' + \beta \cvectwo{0}{W_{\theta}}^\T$, then $\nabla C(K; W_{\theta}) = \nabla C(K'; W_{\theta}) + \beta \cvectwo{0}{W_{\theta}}^\T $. In particular, if we perform gradient descent dynamics $K_{t+1} = K_{t} - \eta \nabla C(K_t; W_{\theta})$, then we have
\begin{equation}
K_{t} = K_{0} (I - \eta M)^{t} + B (I - \eta M)^{t-1} + ... + B
\end{equation}

where $M =\cvectwo{W_{c}}{W_{\theta}}\cvectwo{W_c}{W_{\theta}}^\T$ is the Hessian of $C(K; W_{\theta})$ and $B = -\eta \cvectwo{W_c}{W_{\theta}} $. Note that $M$ has rank at most $n \ll n+p$, and thus at a high level, $K_{0}(I - \eta M)^{t}$ does not diminish if some portion of $K_{0}$ lies in the 
orthogonal complement of the range of $M$. If the initialization is $K_{0} \sim \mathcal{N} (0, I)$, then it is highly likely we can find a subspace $Q \subseteq \mathrm{range}(M)^\perp$ for which $K_{0}Q \neq 0$.

This is a specific example of the general case where if the Hessian of a function $f(x)$ is degenerate everywhere (e.g. has rank $k < n$), then an $x_{0}$ initialized with e.g. an isotropic Gaussian distribution cannot converge under gradient descent to a minimizer that lives in the span of the Hessian, as the non-degenerate components do not change. In particular, Proposition 4.7 in \citep{gfa} points to the exact magnitude of the non-degenerate component in the relevant subspace $Q$: $\E_{x \sim \mathcal{N}(0,I) } \left[\norm{\mathrm{Proj}_{Q}(x)}^{2} \right]= \frac{n-k}{n}$.

The generalization gap may decrease if the number of level samples is high enough. This can be seen by the sample Hessian of $\widehat{C}(K) = \frac{1}{m} \sum_{i=1}^{m} C(K; W_{\theta_{i}})$ being $\widehat{M} = \frac{1}{m} \sum_{i=1}^{m} M_{i}$ where $M_{i} = \cvectwo{W_{c}}{W_{\theta_{i}}}\cvectwo{W_c}{W_{\theta_{i}}}^\T \>\>\> \forall i$. In particular, as $m$ increases, the rank of $\widehat{M}$ increases, which allows gradient descent to recover the minimizer $K_\star$ better.

We calculate the exact error of gradient descent
on $m$ samples of the 1-step LQR problem.

To simplify notation, we rewrite $W_{\theta_{i}}$ as $W_{i}$, and interchangeably use the abbreviations for the finite $C_m(K) = C_m(\cdot; \{W_i\}) = \frac{1}{m} \sum_{i=1}^{m} C(K; W_{i})$ when necessary.
We consider
gradient descent $K_{t+1} = K_t - \eta \nabla C_m(K_t)$
starting from a random $K_0$ with each
entry iid $\calN(0, \psi^2)$ and $\eta$ sufficiently
small so that gradient descent converges.
Let $K_\infty$ denote the limit point of
gradient descent. We prove the following
generalization gap between $K_\infty$ and $K_{\mathrm{opt}}$.

\begin{mythm}
\label{thm:gen_error}
Suppose that $n$ divides $p$.
Fix an $m \in \{1, ..., p/n\}$.
Suppose that the samples $\{W_1, ..., W_m\}$ 
are generated by first sampling
a $W \sim \mathrm{Unif}(O(p))$,
and then partitioning the first $n \cdot m$ columns of
$W$ into $W_1, ..., W_m$.
Suppose gradient descent is run with a sufficiently small
step size $\eta$ so that it converges to a unique limit
point $K_\infty$.
The limit point $K_\infty$ has the following
generalization error:
\begin{align*}
    \E[C(K_\infty)] = 
    \underbrace{\frac{n}{2} + \frac{m^2 n}{2 (m+1)^2} + \frac{n^2 m}{2p(m+1)^2} - \frac{m}{m+1}{n}}_{=: E_1} + 
    \underbrace{\frac{\psi^2 n^2}{2} \left( \frac{m+2}{m+1} - \frac{m^2}{m+1} \frac{n}{p} \right)}_{=: E_2} \:.
\end{align*}
Here, the expectation is over
the randomness of the samples
$\{W_i\}_{i=1}^{m}$ and
the initalization $K_0$.
The contribution from $E_1$ is due to the generalization
error of the \emph{minimum-norm} stationary point of
$C_m(\cdot; \{W_i\})$.
The contribution from $E_2$ is due to the
full-rank initialization of $K_0$. 
\end{mythm}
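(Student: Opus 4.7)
The plan is to exploit the orthogonality $W_i^\T W_j = \delta_{ij} I_n$ of the samples in order to block-diagonalize the sample Hessian and then solve the gradient descent dynamics in closed form. First I would extend the partition to $W = [W_1 \mid \cdots \mid W_{p/n}]$ and define the signal and noise column blocks $f = \cvectwo{W_c}{0}$ and $e_i = \cvectwo{0}{W_i}$, so that $U_i := \cvectwo{W_c}{W_i} = f + e_i$ and $\{f, e_1, \ldots, e_{p/n}\}$ is an orthonormal family of $n$-column blocks in $\R^{n+p}$. Splitting further along the $n$ columns into subspaces $V_\alpha = \Span\{f_\alpha, e_{1,\alpha}, \ldots, e_{p/n,\alpha}\}$ of dimension $1 + p/n$ each, the sample Hessian decomposes as $\overline{M} := (1/m)\sum_i U_i U_i^\T = \bigoplus_\alpha \overline{M}_\alpha$. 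A direct computation shows each $\overline{M}_\alpha$ has rank $m$, with a one-dimensional ``diagonal'' null direction $n_\alpha \propto f_\alpha - \sum_{i=1}^{m} e_{i,\alpha}$ plus the $(p/n - m)$-dimensional tail $\Span\{e_{j,\alpha} : j > m\}$.

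Next, since the gradient with respect to row $\beta$ of $K$ is $\nabla_{K_\beta} C_m = \bar u_\beta + K_\beta \overline{M}$ with $\bar u_\beta := f_\beta + (1/m)\sum_{i=1}^{m} e_{i,\beta}$ lying in $V_\beta \cap \mathrm{range}(\overline{M}_\beta)$, gradient descent decouples into $n^2$ independent linear iterations indexed by $(\beta,\alpha)$, inhomogeneous when $\alpha = \beta$ and purely homogeneous otherwise. Because every update lies in $\mathrm{range}(\overline{M})$, the projection $\mathrm{Proj}_{\ker(\overline{M})}(K_0)$ is conserved for all $t$, so for $\eta$ small enough one has $K_\infty = K_\star^{\mathrm{mn}} + \mathrm{Proj}_{\ker(\overline{M})}(K_0)$, where $K_\star^{\mathrm{mn}}$ is the unique minimum-norm stationary point of $C_m$. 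Solving the restricted $(1{+}m)$-dimensional linear system inside each $V_\beta$ together with the orthogonality constraint $\langle K_\star^{\mathrm{mn}}|_{V_\beta}, n_\beta\rangle = 0$ yields
\begin{equation*}
    K_\star^{\mathrm{mn}}|_{V_\beta} = -\tfrac{m}{m+1}\, f_\beta^\T - \tfrac{1}{m+1}\sum_{i=1}^{m} e_{i,\beta}^\T, \qquad K_\star^{\mathrm{mn}}|_{V_\alpha} = 0 \text{ for } \alpha \neq \beta.
\end{equation*}

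To finish, I would apply the identity $\E_{W_\theta}[C(K)] = n/2 + \Tr(K f) + (1/2)\Tr(K D K^\T)$ with $D := \E_{W_\theta}[U_\theta U_\theta^\T] = \bmattwo{I_n}{0}{0}{(n/p) I_p}$, derived already in Proposition~\ref{prop:hessian}'s proof from $\E[W_\theta] = 0$ and $\E[W_\theta W_\theta^\T] = (n/p) I_p$. Direct substitution of $K_\star^{\mathrm{mn}}$ (using $W_i^\T W_j = \delta_{ij} I_n$) collapses everything into the deterministic contribution $C(K_\star^{\mathrm{mn}}) = E_1$. By the rotation invariance of the iid Gaussian $K_0$, the projection $\mathrm{Proj}_{\ker(\overline{M})}(K_0)$ has iid $\calN(0,\psi^2)$ coordinates in the orthonormal null basis $\{n_\alpha\} \cup \{e_{j,\alpha} : j > m\}$; the cross term in $\E_{K_0}\Tr(K_\infty D K_\infty^\T)$ vanishes because $\E[K_0] = 0$, and the pure variance contribution reduces to a sum of $D$-weighted squared norms of these null basis vectors, producing exactly $E_2$.

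The main obstacle I expect is the bookkeeping needed to isolate the $n^2$ blocks $(K_\beta)|_{V_\alpha}$ and track how the diagonal null direction $n_\alpha$ mixes the signal coordinate $f_\alpha$ with the noise coordinates $e_{i,\alpha}$ (weights $+1/\sqrt{m+1}$ and $-1/\sqrt{m+1}$). The $n^2 m/(p(m+1)^2)$ term in $E_1$ originates from the noise half of $K_\star^{\mathrm{mn}}|_{V_\beta}$ being re-weighted by the $n/p$ factor in $D$, while the $m^2 n/(m+1)^2$ term comes from the signal half; the $-m^2 n/(p(m+1))$ correction in $E_2$ is the corresponding noise-side contribution of the Gaussian initialization's null mass along $n_\alpha$. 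Once the decoupled block structure is laid out cleanly, every remaining step is a direct Gaussian second-moment calculation.
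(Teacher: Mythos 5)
Your proposal is correct and reaches the stated formula, and it shares the paper's overall skeleton --- split $K_\infty = K_{\mathrm{min}} + \mathrm{Proj}_{\ker}(K_0)$, evaluate $C(K) = \tfrac{n}{2} + \Tr(Kf) + \tfrac{1}{2}\Tr(KDK^\T)$ with $D = \bmattwo{I}{0}{0}{(n/p)I}$ on each piece, and kill cross terms with $\E[K_0]=0$ --- but the computational route is genuinely different. The paper works with the stacked data matrix $Z_m$, inverts $Z_m Z_m^\T$ by the block matrix-inversion formula, and reads off $K_{\mathrm{min}} = -\begin{bmatrix} I_n & \cdots & I_n\end{bmatrix}(Z_m^\dag)^\T$ and $M_\infty = I - P_{Z_m^\T}$ as explicit block matrices whose $D$-weighted traces are then computed, taking expectations over $W$ via $\E[W_iW_j^\T]$. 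You instead diagonalize: the orthonormal frame $\{f, e_1,\dots,e_{p/n}\}$ block-diagonalizes the sample Hessian into $n$ rank-$m$ operators on $(1+p/n)$-dimensional subspaces, the min-norm stationary point reduces to the scalar constrained least-squares problem $a+b_i=-1$, $\min a^2+\sum_i b_i^2$ (giving $a=-m/(m+1)$, $b_i=-1/(m+1)$, which matches the paper's $K_{\mathrm{min}}=-\rvectwo{\tfrac{m}{m+1}W_c^\T}{\tfrac{1}{m+1}\sum_i W_i^\T}$), and $E_2$ becomes a sum of $v^\T D v$ over the explicit null basis $\{n_\alpha\}_\alpha\cup\{e_{j,\alpha}: j>m\}$, which checks out: $n\cdot\tfrac{1}{m+1}(1+\tfrac{mn}{p}) + n(\tfrac{p}{n}-m)\tfrac{n}{p} = \tfrac{(m+2)n}{m+1}-\tfrac{m^2n^2}{p(m+1)}$. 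What your route buys: you never form the pseudoinverse or the projector, the kernel structure is laid bare, and $\Tr\bigl((I-P_{Z_m^\T})D\bigr)$ comes out deterministic given the partition construction, so the only expectation genuinely needed for $E_2$ is over $K_0$ (the paper's averaging over $W$ there is doing no real work, mirroring its own observation that $E_1$ is independent of the realization of $W$). What the paper's route buys is a mechanical matrix-algebra verification with less basis bookkeeping and reusable closed forms for $Z_m^\dag$ and $P_{Z_m^\T}$. One point to make explicit when writing yours up: the linear term $\Tr(K_\infty f)$ also acquires a $\Tr\bigl(\mathrm{Proj}_{\ker}(K_0)\,f\bigr)$ contribution, which vanishes in expectation for the same reason as the quadratic cross term.
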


\begin{figure}[htb]
\centering
\includegraphics[scale=0.34]{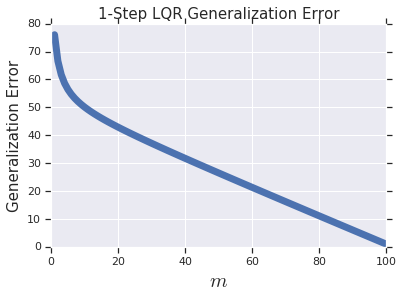}
\includegraphics[scale=0.34]{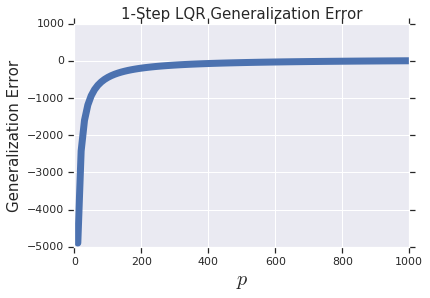}
\includegraphics[scale=0.34]{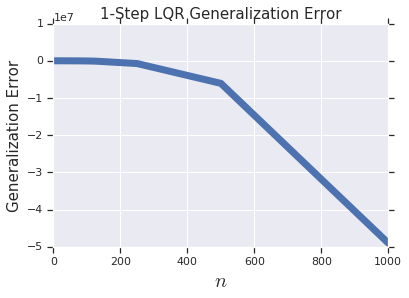}
\caption{Plot of $\E[C(K_\infty)]$ as a function of $m, p, n$ with elsewhere fixed default values $n=10$, $p=1000$, $m = 10$, and $\psi=1$.}
\end{figure}

We remark that our proof specifically relies on the rank of the Hessian as $m$ increases, rather than a more common concentration inequality used in empirical risk minimization arguments, which leads to a $\frac{1}{\sqrt{m}}$ scaling. Furthermore, the above expression for $\E[C(K_\infty)]$ does not scale increasingly with $poly(p)$ for the convex 1-Step LQR case, while empirically, the non-convex infinite LQR case does indeed increase from increasing the noise dimension $p$ (as shown in Section \ref{sec:exp_lqr}). Interestingly, this suggests that there is an extra contribution from the non-convexity of the cost, where the observation-modified gradient dynamics tends to reach worse optima. 

\subsubsection{Proof of Theorem~\ref{thm:gen_error}}

Fix integers $n, p$ with $p \geq n$ and suppose that
$n$ divides $p$. 
Draw a random $W \in O(p)$ uniformly from the Haar measure
on $O(p)$ and divide
$W$ column-wise into $W_1, W_2, ..., W_{p/n}$
(that is $W_i \in \R^{p \times n}$, $W_i^\T W_i = I_n$ and $W_i^\T W_j = 0$ when $i \neq j$).
Also draw a $W_c \in O(n)$ uniformly
and independent from $W$.
Now we consider the matrix $Z_m \in \R^{m \cdot n \times (n+p)}$ for $m=1, ..., p/n$ defined as:
\begin{align*}
    Z_m := \begin{bmatrix}
        W_c^\T & W_1^\T \\
        \vdots & \vdots \\
        W_c^\T & W_m^\T
    \end{bmatrix} \:.
\end{align*}

\begin{myprop}
We have that $Z_m^{\dag}$ is given as:
\begin{align*}
    Z_m^{\dag} = \begin{bmatrix}
    \frac{1}{m+1} W_c & \hdots & \frac{1}{m+1} W_c \\
    \frac{m}{m+1} W_1 - \frac{1}{m+1} \sum_{i \neq 1} W_i & \cdots & \frac{m}{m+1} W_m - \frac{1}{m+1} \sum_{i \neq m} W_i
    \end{bmatrix} \:.
\end{align*}
\end{myprop}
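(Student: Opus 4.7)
The plan is to show that $Z_m$ has full row rank under the hypothesis $mn \le p$, and then invoke the standard formula $Z_m^{\dag} = Z_m^\T (Z_m Z_m^\T)^{-1}$. Everything reduces to computing $Z_m Z_m^\T$, inverting it via Sherman--Morrison, and multiplying out the resulting block structure.

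First I would compute $Z_m Z_m^\T$ directly from the block definition. The $(i,j)$-block is $W_c^\T W_c + W_i^\T W_j$. Because $W_c \in O(n)$, we have $W_c^\T W_c = I_n$, and because $W_1,\ldots,W_{p/n}$ are disjoint blocks of columns of a single $W \in O(p)$, we have $W_i^\T W_j = \delta_{ij} I_n$. Hence the $(i,j)$-block is $(1+\delta_{ij}) I_n$, so
\begin{equation*}
Z_m Z_m^\T = (I_m + \vone_m \vone_m^\T) \otimes I_n \:.
\end{equation*}
The eigenvalues of $I_m + \vone_m \vone_m^\T$ are $1$ (multiplicity $m-1$) and $m+1$ (multiplicity $1$), both positive, so $Z_m Z_m^\T$ is invertible and $Z_m$ has full row rank $mn$ (this is where the hypothesis $mn \le p$ is used implicitly, to guarantee the $W_i$'s can be chosen orthogonal).

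Next I apply Sherman--Morrison to get $(I_m + \vone_m \vone_m^\T)^{-1} = I_m - \tfrac{1}{m+1}\vone_m \vone_m^\T$, and hence
\begin{equation*}
(Z_m Z_m^\T)^{-1} = \left(I_m - \tfrac{1}{m+1}\vone_m \vone_m^\T\right) \otimes I_n \:.
\end{equation*}
The $(j,k)$-block of this matrix is $(\delta_{jk} - \tfrac{1}{m+1}) I_n$. Writing $Z_m^\T$ as two row-blocks, with top block $[W_c,\ldots,W_c]$ and bottom block $[W_1,\ldots,W_m]$, the $k$-th block column of $Z_m^{\dag} = Z_m^\T (Z_m Z_m^\T)^{-1}$ is obtained by summing $W_c$ or $W_j$ weighted by $(\delta_{jk}-\tfrac{1}{m+1})$. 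The top contribution is $W_c \sum_{j=1}^m (\delta_{jk} - \tfrac{1}{m+1}) = \tfrac{1}{m+1} W_c$, and the bottom contribution is $W_k - \tfrac{1}{m+1}\sum_{j=1}^m W_j = \tfrac{m}{m+1} W_k - \tfrac{1}{m+1} \sum_{j \neq k} W_j$. This matches the claimed expression.

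There is no real obstacle here: the argument is a direct pseudoinverse computation that only relies on the two orthogonality identities $W_c^\T W_c = I_n$ and $W_i^\T W_j = \delta_{ij} I_n$. The one point to check carefully is that taking the $W_i$'s from disjoint column blocks of a single Haar-random $W \in O(p)$ does give $W_i^\T W_j = \delta_{ij} I_n$, which is immediate since $W^\T W = I_p$.
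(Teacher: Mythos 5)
Your proposal is correct and follows essentially the same route as the paper: compute $Z_m Z_m^\T = I_{mn} + (\vone_m\vone_m^\T)\otimes I_n$ from the orthogonality relations, invert it by the rank-one update (Sherman--Morrison) formula to get diagonal blocks $\frac{m}{m+1}I_n$ and off-diagonal blocks $-\frac{1}{m+1}I_n$, and multiply out $Z_m^\T (Z_m Z_m^\T)^{-1}$. The Kronecker-product bookkeeping is a cosmetic difference only.
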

\begin{proof}
Using the fact that $W_c^\T W_c = I$,
$W_i^\T W_i = I_n$, and $W_i^\T W_j = 0$ 
for $i \neq j$, we can compute:
\begin{align*}
    Z_m Z_m^\T = I_{m \cdot n} + \begin{bmatrix} I_n \\ \vdots \\ I_n \end{bmatrix}\begin{bmatrix} I_n \\ \vdots \\ I_n \end{bmatrix}^\T \:.
\end{align*}
By the matrix inversion formula we can compute
the inverse $(Z_m Z_m^\T)^{-1}$
as the $m \times m$ block matrix where
the diagonal blocks are $\frac{m}{m+1} I$
and the off-diagonal blocks are $-\frac{1}{m+1} I$.
We can now write $Z_m^{\dag}$ using the formula:
\begin{align*}
    Z_m^{\dag} = Z_m^\T (Z_m Z_m^\T)^{-1} \:.
\end{align*}
The result is seen to be:
\begin{align*}
    \begin{bmatrix}
    \frac{1}{m+1} W_c & \hdots & \frac{1}{i+1} W_c \\
    \frac{m}{m+1} W_1 - \frac{1}{m+1} \sum_{i \neq 1} W_i & \cdots & \frac{m}{m+1} W_m - \frac{1}{m+1} \sum_{i \neq m} W_i
    \end{bmatrix} \:.
\end{align*}
\end{proof}

\begin{myprop}
We have that:
\begin{align*}
    P_{Z_m^\T} = 
    \bmattwo{ \frac{m}{m+1}W_cW_c^\T }{ \frac{1}{m+1}\sum_{i=1}^{m} W_c W_i^\T }{
    \frac{1}{m+1}\sum_{i=1}^{m} W_i W_c^\T }{ 
    \frac{m}{m+1}\sum_{i=1}^{m} W_iW_i^\T - \frac{1}{m+1} \sum_{i \neq j} W_i W_j^\T } \:.
\end{align*}
\end{myprop}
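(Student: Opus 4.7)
The plan is to recognize that $P_{Z_m^\T}$ denotes the orthogonal projector onto the column space of $Z_m^\T$ (equivalently, the row space of $Z_m$) as a subspace of $\R^{n+p}$. Since the previous proposition implicitly established that $Z_m$ has full row rank (the matrix $Z_m Z_m^\T = I_{mn} + \vone_m\vone_m^\T \otimes I_n$ is invertible), we have the standard identity
\begin{equation*}
    P_{Z_m^\T} \;=\; Z_m^\T (Z_m Z_m^\T)^{-1} Z_m \;=\; Z_m^{\dag} Z_m \:.
\end{equation*}
So the proof reduces to carrying out this block matrix product using the closed-form expression for $Z_m^\dag$ just derived.

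I would organize the computation by writing $Z_m^\dag$ in terms of its $m$ block columns $A_j \in \R^{(n+p)\times n}$ and $Z_m$ in terms of its $m$ block rows $B_j^\T \in \R^{n \times (n+p)}$, so that $P_{Z_m^\T} = \sum_{j=1}^m A_j B_j^\T$ where
\begin{equation*}
    A_j = \cvectwo{\tfrac{1}{m+1} W_c}{\tfrac{m}{m+1} W_j - \tfrac{1}{m+1}\sum_{i\neq j} W_i}, \qquad B_j^\T = \rvectwo{W_c^\T}{W_j^\T} \:.
\end{equation*}
Then I would compute the four resulting $(n+p)\times(n+p)$ blocks one at a time: the $(1,1)$ block is $\sum_{j=1}^m \tfrac{1}{m+1} W_c W_c^\T = \tfrac{m}{m+1} W_c W_c^\T$, and the $(1,2)$ block is immediate as $\tfrac{1}{m+1}\sum_j W_c W_j^\T$.

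The two nontrivial blocks are $(2,1)$ and $(2,2)$, which both involve the sum $\sum_{j=1}^m \sum_{i\neq j} W_i (\cdot)$. For the $(2,1)$ block, the key step is the bookkeeping identity
\begin{equation*}
    \sum_{j=1}^m \sum_{i \neq j} W_i = (m-1) \sum_{i=1}^m W_i,
\end{equation*}
which gives $\tfrac{m}{m+1}\sum_j W_j W_c^\T - \tfrac{m-1}{m+1}\sum_i W_i W_c^\T = \tfrac{1}{m+1}\sum_j W_j W_c^\T$, matching the claim. For the $(2,2)$ block the inner products $W_i W_j^\T$ do not collapse (since $i \neq j$), so we simply collect the terms to obtain $\tfrac{m}{m+1}\sum_j W_j W_j^\T - \tfrac{1}{m+1}\sum_{i\neq j} W_i W_j^\T$.

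The main obstacle is purely the bookkeeping of the double sums in the lower-left and lower-right blocks; no properties of $W_c$ or the $W_i$'s beyond their appearance in the explicit formula for $Z_m^\dag$ are needed, since the orthogonality relations $W_c^\T W_c = I$ and $W_i^\T W_j = \delta_{ij} I$ have already been absorbed into the construction of $Z_m^\dag$ in the previous proposition. Once the four blocks are assembled, they are seen to coincide term-by-term with the stated right-hand side.
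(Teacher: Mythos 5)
Your proposal is correct and takes essentially the same route as the paper: both invoke the identity $P_{Z_m^\T} = Z_m^\T (Z_m Z_m^\T)^{-1} Z_m = Z_m^{\dag} Z_m$ and then read off the result from the expression for $Z_m^{\dag}$ computed in the preceding proposition, with your write-up simply making explicit the block bookkeeping (including the collapse $\sum_{j}\sum_{i\neq j} W_i = (m-1)\sum_i W_i$) that the paper leaves to the reader. The only nit is that the four blocks have sizes $n\times n$, $n\times p$, $p\times n$, and $p\times p$ rather than each being $(n+p)\times(n+p)$; this does not affect the argument.
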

\begin{proof}
We use the formula $P_{Z_m^\T} = Z_m^\T (Z_m Z_m^\T)^{-1} Z_m$,
combined with the calculations in the previous proposition.
\end{proof}

Now we consider the gradient of $C_m$:
\begin{align*}
    \nabla C_m(K) = \frac{1}{m} \sum_{i=1}^{m} \cvectwo{W_c}{W_i}^\T + K \left(\frac{1}{m} \sum_{i=1}^{m}\cvectwo{W_c}{W_i}\cvectwo{W_c}{W_i}^\T\right) \:.
\end{align*}
Setting $\nabla C_m(K) = 0$, 
we see that minimizers of $C_m$ are solutions to:
\begin{align*}
K\left( \sum_{i=1}^{m}\cvectwo{W_c}{W_i}\cvectwo{W_c}{W_i}^\T\right) = -\sum_{i=1}^{m} \cvectwo{W_c}{W_i}^\T \:.
\end{align*}
Using our notation above, this is the
same as:
\begin{align*}
    K Z_m^\T Z_m = -\begin{bmatrix} I_n & \cdots & I_n \end{bmatrix} Z_m \:.
\end{align*}
The minimum norm solution, denoted $K_\mathrm{min}$, to this equation is:
\begin{align*}
    K_\mathrm{min} = -\begin{bmatrix} I_n & \cdots & I_n \end{bmatrix} (Z_m^{\dag})^\T \:.
\end{align*}
Next, we recall that $C(K) = \E[ C_m(K) ]$
is:
\begin{align*}
    C(K) = \frac{n}{2} + \frac{1}{2} \Tr\left( K^\T K \bmattwo{I}{0}{0}{\frac{n}{p} I} \right)  + \Tr\left( K \cvectwo{W_c}{0} \right) \:.
\end{align*}
Our next goal is to compute $\E[ C(K_\mathrm{min}) ]$.
First, we observe that:
\begin{align*}
    \Tr\left( K_\mathrm{min} \cvectwo{W_c}{0} \right) &= -\bigip{Z_m^\dag}{\begin{bmatrix}
        W_c & \hdots & W_c \\
        0 & \hdots & 0
    \end{bmatrix}} \\
    &= -\frac{m}{m+1} \Tr( W_c^\T W_c) \\
    &= -\frac{m}{m+1} n \:.
\end{align*}
Next, 
defining $B_i := \frac{m}{m+1} W_i - \frac{1}{m+1} \sum_{j \neq i} W_j$,
we compute $K_\mathrm{min}^\T K_\mathrm{min}$ as:
\begin{align*}
    K_\mathrm{min}^\T K_\mathrm{min} &= \begin{bmatrix} 
        \frac{1}{m+1} W_c & \cdots & \frac{1}{m+1} W_c \\
        B_1 & \cdots & B_m
    \end{bmatrix} 
    \begin{bmatrix}
        I & \cdots & I \\
        \vdots & \ddots & \vdots \\
        I & \cdots & I
    \end{bmatrix}
    \begin{bmatrix}
        \frac{1}{m+1} W_c^\T & B_1^\T \\
        \vdots & \vdots \\
        \frac{1}{m+1} W_c^\T & B_m^\T
    \end{bmatrix} \\
    &= \begin{bmatrix}
        \frac{m^2}{(m+1)^2} I & \ast \\
        \ast & (\sum_{i=1}^{m} B_i)(\sum_{i=1}^{m} B_i)^\T
    \end{bmatrix} \\
    &= \begin{bmatrix}
        \frac{m^2}{(m+1)^2} I & \ast \\
        \ast & \frac{1}{(m+1)^2}(\sum_{i=1}^{m} W_i)(\sum_{i=1}^{m} W_i)^\T
    \end{bmatrix} \:.
\end{align*}
Therefore:
\begin{align*}
    \Tr\left( K_\mathrm{min}^\T K_\mathrm{min} \bmattwo{I}{0}{0}{\frac{n}{p} I} \right) &= \frac{m^2}{(m+1)^2} n + \frac{n}{p(m+1)^2} (nm) \:.
\end{align*}
Putting everything together, we have that:
\begin{align*}
    C(K_\mathrm{min}) = \frac{n}{2} + \frac{m^2 n}{2 (m+1)^2} + \frac{n^2 m}{2p(m+1)^2} - \frac{m}{m+1}{n} \:.
\end{align*}
Notice that this final value is not
a function of the actual realization of $W$.

We now consider the second source of error,
which comes from the initialization $K_0$.
Recall that each entry of $K_0$ is
drawn iid from $\calN(0, \psi^2)$.

Let us consider the gradient descent dynamics:
\begin{align*}
    K_{t+1} = K_t - \eta \nabla C_m(K_t) \:.
\end{align*}
Unrolling the dynamics under our notation:
\begin{align*}
    K_t = K_0 ( I - (\eta/m) Z_m^\T Z_m)^t - (\eta/m) \begin{bmatrix} I_n &\hdots &I_n \end{bmatrix} Z_m \sum_{k=0}^{t-1} (I - (\eta/m) Z_m^\T Z_m)^k \:.
\end{align*}
It is not hard to see that 
for $\eta$ sufficiently small, as $t \to \infty$
\begin{align*}
    \lim_{t \to \infty} - (\eta/m) \begin{bmatrix} I_n &\hdots &I_n \end{bmatrix} Z_m \sum_{k=0}^{t-1} (I - (\eta/m) Z_m^\T Z_m)^k  = K_\mathrm{min} \:.
\end{align*}
Hence:
\begin{align*}
    K_\infty = K_0( I - (\eta/m) Z_m^\T Z_m)^\infty + K_\mathrm{min} \:.
\end{align*}
Call the matrix $M_\infty :=( I - (\eta/m) Z_m^\T Z_m)^\infty$.
We observe that:
\begin{align*}
    C(K_\infty) &= 
    \frac{n}{2} + \frac{1}{2} \Tr\left( M_\infty^\T K_0^\T K_0 M_\infty \bmattwo{I}{0}{0}{\frac{n}{p}I} \right) \\
    &\qquad+ \frac{1}{2}\Tr\left(
        (M_\infty^\T K_0^\T K_\mathrm{min} + K_\mathrm{min}^\T K_0 M_\infty)\bmattwo{I}{0}{0}{\frac{n}{p}I} \right) \\
    &\qquad+ \frac{1}{2}\Tr\left(
        K_\mathrm{min}^\T K_\mathrm{min} \bmattwo{I}{0}{0}{\frac{n}{p}I}
    \right) \\
    &\qquad+ \Tr\left(
      K_0 M_\infty \cvectwo{W_c}{0}
    \right) \\
    &\qquad+ \Tr\left(
      K_\mathrm{min} M_\infty \cvectwo{W_c}{0}
    \right) \:.
\end{align*}
Noticing that $K_0$ is independent
of $Z_m$, taking expectations
\begin{align*}
    \E[C(K_\infty)] &= \E[C(K_\mathrm{min})] + \frac{1}{2}\E\Tr\left(
         M_\infty^\T K_0^\T K_0 M_\infty\bmattwo{I}{0}{0}{\frac{n}{p}I} \right) \\
         &= \E[C(K_\mathrm{min})] + \frac{\psi^2 n}{2} \E \Tr\left( M_\infty^\T M_\infty \bmattwo{I}{0}{0}{\frac{n}{p}I} \right) \:.
\end{align*}
Above, we use the fact that $\E[K_0^\T K_0] = \psi^2 n I_{n+p}$.
Now it is not hard to see that for
$\eta$ sufficiently small,
we have that $M_\infty = I - P_{Z_m^\T}$.
On the other hand, 
using $\E[W_i W_j^\T] = 0$, we have that
\begin{align*}
    \E[P_{Z_m^\T}] &=
    \bmattwo{\frac{m}{m+1} I}{0}{0}{\frac{m^2}{m+1} \frac{n}{p} I}
\end{align*} \:.
Therefore:
\begin{align*}
    \E[M_\infty^\T M_\infty] = \bmattwo{ (1-m/(m+1)) I}{0}{0}{(1 - m^2 n / ((m+1)p)I} \:.
\end{align*}
Plugging in to the previous calculations, this yields
\begin{align*}
    \E[C(K_\infty)] = \E[C(K_\mathrm{min})] + 
    \frac{\psi^2 n^2}{2} \left( \frac{m+2}{m+1} - \frac{m^2}{m+1} \frac{n}{p} \right) \:.
\end{align*}

\end{document}